%% file: main.tex
\documentclass[letterpaper]{article} 
\usepackage{aaai2027}  
\usepackage[hyphens]{url}  
\usepackage{graphicx} 
\usepackage{natbib}  
\usepackage{caption} 
\usepackage{algorithm}
\usepackage{algorithmic}

\usepackage{newfloat}
\usepackage{listings}
\DeclareCaptionStyle{ruled}{labelfont=normalfont,labelsep=colon,strut=off} 
\floatstyle{ruled}
\newfloat{listing}{tb}{lst}{}
\floatname{listing}{Listing}

\usepackage{graphicx}
\usepackage{amsmath,amssymb,amsfonts}
\usepackage{booktabs}
\usepackage{enumitem}
\usepackage{microtype}
\usepackage{xcolor}
\usepackage{amsthm}
\usepackage{mathtools}

\newtheorem{definition}{Definition}
\newtheorem{theorem}{Theorem}
\newtheorem{proposition}{Proposition}

\newtheorem{hypothesis}{Hypothesis}

\definecolor{figgreen}{RGB}{0,170,85}

\usepackage[most]{tcolorbox}
\usepackage{tabularx}

\usepackage[most]{tcolorbox}

\usepackage{booktabs}
\usepackage{xcolor}

\usepackage{makecell}

\definecolor{routingblue}{HTML}{2F80B7}
\definecolor{filterorange}{HTML}{C06A3B}
\definecolor{valuepurple}{HTML}{744371}

\definecolor{posgreen}{RGB}{0,90,45}
\definecolor{negred}{RGB}{150,25,25}
\definecolor{distgray}{gray}{0.48}

\usepackage{booktabs}

\title{Lifted State Hypothesis in Large Language Models}
\author{
    Bumjin Park,
    Jaesik Choi
}

\affiliations{
    KAIST AI\\
    Daejeon, Republic of Korea\\
    bumjinpark@kaist.ac.kr, jaesik.choi@kaist.ac.kr
}

\begin{document}

\maketitle

\begin{abstract}
Large language models (LLMs) adapt rapidly through fine-tuning and
in-context learning, yet it remains unclear which inputs they treat as the
same case and why their predictions change together. We introduce
the \emph{Lifted State Hypothesis}. Under fixed model parameters, context
scope, and target computation, samples indistinguishable in their observed
computation-relevant behavior form a \emph{computation-relative type}. We
hypothesize that compatible episodes activate a reusable latent
component---a \emph{lifted state}---that supports the target computation.
State reuse enables type-level generalization but creates a non-monotonic
revision problem. When later evidence distinguishes a subtype, revising a
state still shared with its parent may affect members whose predictions
should remain unchanged. The model must separate the subtype through
rerouting, a new state, or input-specific compensation. We formalize this
relation between generalization and revision. We introduce the
\textsc{NMR-Type Dataset} to evaluate LLMs. The dataset first supports a
broad modulo rule. It then provides conflicting supervision for a withheld
subtype while replaying earlier examples. Across full fine-tuning, LoRA, and
in-context learning, models often generalize the broad rule to the subtype
but fail to localize its later revision. These results provide behavioral
evidence consistent with the hypothesis and motivate further study of
lifted-state formation and revision.
\end{abstract}

\input{scripts/introduction}
\input{scripts/methods}
\input{scripts/experiments}
\input{scripts/conclusion}
\bigskip

\bibliography{aaai2027}


\appendix

\input{appendix/1_lifted_state_comparison}
\input{appendix/2_theory}
\input{appendix/3_nmr_type_dataset}
\input{appendix/4_experimental_results}

\end{document}

%% file: scripts/introduction.tex
\begin{center}
\itshape
``no statement is immune to revision.''
\vspace{0.6em}
\normalfont

--- W.V.O Quine
\end{center}
\vspace{0.5em}

\section{Introduction}
\label{sec:introduction}

Large language models (LLMs) acquire broad linguistic, factual, and
behavioral knowledge through pretraining and post-training, and rapidly adapt
through fine-tuning or in-context learning
\cite{NEURIPS2020_1457c0d6,3600270.3602281}. Knowledge acquired from one
example often extends to related cases. This sharing supports efficient
generalization but also exposes multiple predictions to the same update. The
challenge arises when new evidence should revise only part of a shared group.
Understanding adaptation therefore requires explaining \emph{what} changes,
\emph{which inputs} change together, and how their grouping is refined.

Consider Snow White, who first learns not to admit the witch and later learns
to admit a friend wearing the witch's clothes. The second observation should
revise the decision for the friend while preserving the decision for the
witch. This requires refining a previously useful grouping rather than merely
adding another input--output association. The same problem has practical
consequences for LLM safety: further fine-tuning can weaken aligned refusal
behavior, both through low-rank updates and even through benign downstream
data
\cite{lermen2024lora,qi2024finetuning}. Studies of model editing and ripple
effects likewise show that an update can alter related predictions and connect
such propagation to cross-input gradient similarity
\cite{cohen2024ripple,qin2024messy}. These findings characterize the scope and
mechanisms of non-local change, but leave open a prior structural question:
what makes a model treat certain inputs as computationally equivalent, and how
can that equivalence be refined when only a subtype should change?

We address this question through the \emph{Lifted State Hypothesis}. Samples
that are equivalent relative to a context and target computation form a
\emph{computation-relative type}; we call a reusable neural realization of
such a type a \emph{lifted state}. This formulation is motivated by automata,
where inputs with indistinguishable future behavior map to the same state, and
lifted probabilistic inference, where interchangeable instances reuse one
computation until new evidence requires their group to be refined
\cite{hopcroft2001introduction,poole2003first,braz2005lifted}. 
In an LLM, we model a lifted state as a filter--value component. An
episode-conditioned representation interacts with a shared filter, and
episodes that activate the state reuse its value in constructing
representations that support the target computation. Because the underlying representation may be distributed, overlapping, and non-unique, we focus on a minimal formalization of a lifted state that captures its activation and reuse.

This formulation connects type-level generalization to non-monotonic
revision. Learning from diverse members of a parent type can produce a broad
filter that also activates for an aligned but unseen subtype, allowing it to
inherit the shared computation. If the parent and subtype continue to activate
the same state after new evidence distinguishes the subtype, they are
constrained to receive similar changes from that shared component. Selective
revision therefore requires the model to refine their activation structure,
for example by changing the episode representation or filter, introducing an
additional state, or compensating through other components.

To examine the behavioral signature of this predicted coupling, we introduce
the \textsc{NMR-Type Dataset}. The dataset first supports a broad modulo rule
and then introduces conflicting supervision for a withheld subtype while
replaying the previously learned groups. Across full fine-tuning, LoRA, and
in-context learning, models often generalize the initial rule to the unseen
subtype but fail to localize its later revision, altering predictions that
should remain unchanged. These results provide behavioral evidence consistent
with the Lifted State Hypothesis; identifying the corresponding latent states
requires representational and causal analysis.

Our contributions are threefold. First, we formalize computation-relative
types and lifted states as a causal account of reusable computation. Second,
we characterize how a broadly shared state can support generalization while
coupling subsequent changes across a parent type and its subtype. Third, we
introduce the \textsc{NMR-Type Dataset} and demonstrate the predicted
generalization--revision pattern across parameter-based and in-context
learning. Together, these contributions provide a framework for studying how LLMs group
samples into shared computational types and refine those groupings through
generalization and revision.

\section{Related Work}
\label{sec:related-work}

\paragraph{Knowledge revision and update scope.}
Symbolic AI has long studied how new information can revise selected
conclusions while preserving what remains valid, through non-monotonic
reasoning, truth maintenance, and belief revision
\cite{McCarthy1969-MCCSPP,mccarthy1980circumscription,reiter1980logic,
doyle1979truth,alchourron1985logic,darwiche1997logic}. In neural models,
continual learning uses regularization or replay to reduce interference,
whereas model editing modifies selected parametric associations
\cite{kirkpatrick2017overcoming,lopez2017gradient,
de2021editing,meng2022locating,mitchell2022fast}. Recent studies further
distinguish intended ripple effects from collateral changes, relate their
propagation to gradient similarity, and examine rule-level generalization
\cite{cohen2024ripple,qin2024messy,zhou2025ruleedit}. These works characterize the scope and propagation of revision, whereas we
study the shared computational organization from which that scope may arise.

\paragraph{Computational equivalence and lifting.}
Automata map inputs with identical future behavior to the same state, while
predictive representations and state abstractions preserve only distinctions
needed for future prediction or decision making
\cite{hopcroft2001introduction,littman2001predictive,
givan2003equivalence,pmlr-v48-abel16}. Lifted probabilistic inference similarly
groups interchangeable ground instances and reuses one computation across
them
\cite{poole2003first,braz2005lifted,milch2008lifted}. When evidence breaks
this interchangeability, lifted methods refine the grouping through splitting,
shattering, or partial grounding
\cite{pmlr-v22-taghipour12}. These frameworks provide explicit precedents for
equivalence-based reuse and subsequent refinement.

\paragraph{Reusable computation in LLM representations.}
Interpretability studies identify several candidate forms of reusable
computation. Transformer MLPs have been modeled as key--value memories, while
knowledge neurons and distributed features have been linked to factual
associations
\cite{geva-etal-2021-transformer,dai-etal-2022-knowledge}. Task vectors,
function vectors, and residual-stream belief-state geometry further show that
compact or distributed representations can support computations across
multiple contexts
\cite{hendel2023taskvectors,todd2024functionvectors,shai2024beliefstate}.
The Lifted State Hypothesis provides a functional account of how such reusable representations are formed and shared across inputs.

%% file: scripts/methods.tex
\section{Computation-Relative Types and \\Lifted States}
\label{sec:types-and-lifted-states}

In this section, we first define \emph{computation-relative types} as
extensional equivalence classes induced by behavior relevant to a target
computation. We then discuss conceptual precedents in automata theory and
lifted probabilistic inference, which motivate equivalence-based reuse and
refinement. Finally, we present the \emph{Lifted State Hypothesis} as a neural
account of how episodes within such types may reuse computation under minimal
representational assumptions: an input-conditioned representation, a filter,
and an output representation.

\subsection{Computational Types}
\label{sec:computational-types}

Let \(\mathcal{C}\), \(\mathcal{X}\), and \(\mathcal{F}\) denote the spaces
of contexts, samples, and target computations, respectively. A computation
\(f\in\mathcal{F}\) specifies an analyst-selected functional role under which
model behavior is evaluated; it need not be explicitly supplied to the model.
Under parameters \(\theta\), it induces a context-dependent mapping
\begin{equation}
    f_{\theta}:
    \mathcal{C}\times\mathcal{X}
    \rightarrow
    \mathcal{Z}_{f},
    \label{eq:computation-map}
\end{equation}
where \(\mathcal{Z}_{f}\) is the space of computation outcomes, such as model
outputs, probability distributions, or internal representations.

Let
\(
    \rho_f:\mathcal{Z}_f\rightarrow\mathcal{V}_f
\)
be an observation map into a space \(\mathcal{V}_f\) that retains only the
distinctions relevant to \(f\). For example, \(\rho_f\) may extract a discrete
decision, a task-relevant statistic, or the functional consequence of an
internal representation. Given a context scope
\(\Gamma\subseteq\mathcal{C}\), two samples are
\emph{computationally equivalent} when their observed \(f\)-relevant behavior
agrees throughout \(\Gamma\):
\begin{equation}
\begin{aligned}
    x\equiv_{\theta,\Gamma,f}x'
    \quad\Longleftrightarrow\quad
    \rho_f\!\left(f_{\theta}(x;c)\right)
    &=
    \rho_f\!\left(f_{\theta}(x';c)\right),\\[-0.2em]
    &\forall c\in\Gamma.
\end{aligned}
\label{eq:computational-equivalence}
\end{equation}
The corresponding \emph{computation-relative type} is the equivalence class
\begin{equation}
    \tau_{\theta,\Gamma,f}(x)
    =
    [x]_{\equiv_{\theta,\Gamma,f}}.
    \label{eq:computational-type}
\end{equation}

A type is therefore relative to the current parameters \(\theta\), context
scope \(\Gamma\), and target computation \(f\), rather than an intrinsic
semantic category. The same sample may belong to different types under
different contexts or computations, and learning may alter the partition by
changing \(\theta\). Conversely, surface-dissimilar samples may share a type
when they are indistinguishable for the computation under study. A shared
output label alone is insufficient when the samples differ in other
computation-relevant behavior.

\subsection{Computational Precedents}
\label{sec:computational-precedents}

Two exact computational settings illustrate how equivalence classes support
reusable state-level computation. They serve as precedents for the hypothesis
below, not as architectural claims about LLMs.

\paragraph{Automata.}
For a language \(L\subseteq\Sigma^{*}\), let \(x\) be an input prefix and
\(w\in\Sigma^{*}\) a continuation. The relevant computation is the future
acceptance decision
\begin{equation}
    f_L(x;w)
    =
    \mathbb{I}[xw\in L].
    \label{eq:automata-computation}
\end{equation}
Two prefixes are equivalent when no continuation distinguishes them:
\begin{equation}
    x\equiv_L x'
    \quad\Longleftrightarrow\quad
    f_L(x;w)=f_L(x';w),
    \quad
    \forall w\in\Sigma^{*}.
    \label{eq:myhill-nerode-equivalence}
\end{equation}
By the Myhill--Nerode theorem, when \(L\) is regular, these equivalence
classes correspond exactly to the states of the minimal deterministic
finite automaton recognizing \(L\)
\cite{hopcroft2001introduction}. Each state therefore preserves precisely
the distinctions required for future acceptance; once two prefixes reach the
same state, their remaining identities are irrelevant to that computation.

\paragraph{Lifted probabilistic inference.}
Lifted probabilistic inference reduces repeated computation in relational
probabilistic models by grouping ground instances that remain interchangeable
under the current evidence, constraints, and query
\cite{poole2003first,braz2005lifted}. Rather than evaluating each grounding
separately, it performs a shared computation for the corresponding lifted
group. Let \(\tau\) denote such a group, \(n_\tau=|\tau|\) its grounding
count, and \(\psi_\tau(y;c)\) the common factor contributed by each member
to the target computation under context \(c\).

\begin{proposition}[Lifted sufficiency and refinement]
\label{prop:lifted-sufficiency-refinement}
Fix a lifted partition, and suppose every \(x_i\in\tau\) contributes the same
factor \(\psi_\tau(y;c)\). Then the count \(n_\tau\) and shared factor
\(\psi_\tau\) are sufficient to compute the repeated factor contribution:
\begin{equation}
    \prod_{x_i\in\tau}
    \psi_\tau(y;c)
    =
    \psi_\tau(y;c)^{n_\tau}.
    \label{eq:lifted-group-contribution}
\end{equation}
If later evidence partitions \(\tau\) into subgroups requiring different
factors, the original lifted summary is insufficient to determine their
assignments and separate contributions.
\end{proposition}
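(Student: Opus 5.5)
The proof has two parts: a direct computation for sufficiency and a counterexample for insufficiency.

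\textbf{Sufficiency.} Fix the lifted partition, a context \(c\), and an outcome \(y\). By hypothesis each ground factor satisfies \(\psi_{x_i}(y;c)=\psi_\tau(y;c)\) for every \(x_i\in\tau\). The product over \(\tau\) therefore has \(n_\tau\) identical factors. Because multiplication is commutative and associative, the order of the groundings is irrelevant, and the product equals \(\psi_\tau(y;c)^{n_\tau}\). I would make this precise by a short induction on \(n_\tau\), with the convention that the empty product is \(1\). The right-hand side depends only on the pair \((n_\tau,\psi_\tau)\), so this summary is sufficient. The identities of the individual members play no role, which mirrors the Myhill--Nerode discussion above.

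\textbf{Insufficiency after refinement.} Suppose later evidence splits \(\tau\) into subgroups \(\tau_1,\dots,\tau_k\) with \(k\ge 2\), carrying factors \(\psi_{\tau_1},\dots,\psi_{\tau_k}\) that are not all equal. The correct contribution is \(\prod_j \psi_{\tau_j}(y;c)^{|\tau_j|}\). I would show that \((n_\tau,\psi_\tau)\) does not determine this quantity by exhibiting two situations that share the same summary but have different refined contributions.

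\begin{enumerate}[nosep]
\item Take \(n_\tau=2\) and the same original factor \(\psi_\tau\) in both situations.
\item In the first situation, the evidence splits \(\tau=\{x_1,x_2\}\) into \(\tau_1=\{x_1\}\) with factor \(a\) and \(\tau_2=\{x_2\}\) with factor \(b\neq a\).
\item In the second situation, the split is the same but the subgroup factors are \(a'\) and \(b'\), chosen with \(a'b'\neq ab\).
\end{enumerate}

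In both situations \(n_\tau\) and \(\psi_\tau\) coincide, yet the refined contributions differ. Hence no function of the original summary can compute the refined contribution. A second counterexample addresses assignments: keep the factor values fixed and change only which members fall in which subgroup, for example taking \(|\tau_1|=1\) versus \(|\tau_1|=2\) when \(n_\tau=3\). This changes \(a^{|\tau_1|}b^{n_\tau-|\tau_1|}\) whenever \(a\neq b\) and \(a,b\neq 0\). So the summary also fails to determine the assignment of members to subgroups and each subgroup's separate contribution.

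The main obstacle is making ``insufficient'' precise, because the proposition is stated informally. I would formalize it as non-identifiability: there is no map \(g\) with \(g(n_\tau,\psi_\tau)=\prod_j\psi_{\tau_j}^{|\tau_j|}\) for all admissible refinements. The counterexamples above then refute the existence of such a \(g\). I would also state the mild nondegeneracy condition the argument needs, namely nonzero factors that are genuinely distinct, as the paper's phrase ``requiring different factors'' suggests.
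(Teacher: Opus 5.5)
Your proposal is correct and follows the paper's reasoning. Sufficiency is just collecting the \(n_\tau\) identical ground factors, which is the paper's one-line justification after its lifted factor reuse proposition; insufficiency rests on \((n_\tau,\psi_\tau)\) discarding which members fall in which subgroup, which the paper only illustrates through the penguin shattering \(\psi_{\mathrm P}(u)^3\to\psi_{\mathrm{P,I}}(u)\psi_{\mathrm{P,U}}(u)^2\), and your second counterexample (\(ab^2\neq a^2b\) with the subgroup factors held fixed) makes precisely that point rigorous as non-identifiability of the assignments.
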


Exact lifted methods therefore use the available evidence and constraints to
split, shatter, or partially ground a group when its previous
interchangeability no longer holds
\cite{milch2008lifted,pmlr-v22-taghipour12}. The common principle is that a
shared representation need preserve only the distinctions required by the
current computation, whereas later refinement depends on information outside
that shared summary.

Motivated by this principle, we hypothesize an analogous organization in
LLMs: a lifted state carries computation shared across compatible episodes,
while episode-conditioned representations or other latent components preserve
information that can distinguish them. Selective revision may use this
information to refine the state's filtering or introduce a separate state.
A detailed account of lifted states in automata theory and lifted probabilistic inference is provided in Appendix~\ref{appendix:1_lifted_states}.

\subsection{Lifted State Hypothesis in Language Models}
\label{sec:lifted-state-hypothesis}

Automata and lifted probabilistic inference provide explicit realizations of equivalence-based reuse. LLM representations, however, are continuous, distributed, and generally non-identifiable. We therefore ask what minimal representational structure is sufficient to satisfy the functional conditions of a lifted state across multiple episodes.

Consider an episode
\(
    e_i=(c_i,x_i,f_i),
\)
where \(c_i\in\mathcal{C}\), \(x_i\in\mathcal{X}\), and
\(f_i\in\mathcal{F}\). The computation \(f_i\) specifies the functional role
under which the episode is analyzed and need not appear explicitly in the
input. Fix an internal representation space
\(
    \mathcal{M}\subseteq\mathbb{R}^{d}
\)
and let
\(
    h_i=h_{\mathcal{M}}(e_i;\theta)\in\mathcal{M}.
\)
We use the following idealized local decomposition:
\begin{equation}
\begin{aligned}
    s_k
    &=
    (z_k,\phi_k),
    \qquad
    a_k(e_i)
    =
    \sigma\!\left(z_k^{\top}q(e_i)\right),\\
    h_i
    &=
    \sum_{k=1}^{K}
    a_k(e_i)\phi_k
    +
    r_i.
\end{aligned}
\label{eq:lifted-state-composition}
\end{equation}
Here, \(q(e_i)\) is an episode-conditioned representation,
\(z_k\) is a shared filter, and \(\phi_k\in\mathcal{M}\) is the value
contributed by the state. The residual \(r_i\) collects episode-specific and
other unmodeled contributions. Thus, episodes may reuse the same lifted state
while retaining distinctions that can later support selective refinement.
This decomposition is a functional approximation rather than a unique
factorization of the representation.

Associate each candidate state \(s_k\) with a computation-relative type
\(\tau_k\subseteq\mathcal{X}\), context scope
\(\Gamma_k\subseteq\mathcal{C}\), and target computation
\(f_k\in\mathcal{F}\). Define its functional scope as
\begin{equation}
    \operatorname{Scope}(s_k)
    :=
    \tau_k \times \Gamma_k \times \{f_k\}.
    \label{eq:state-scope}
\end{equation}
An episode \(e_i=(x_i,c_i,f_i)\) lies within this scope when
\begin{equation}
\begin{aligned}
    e_i \in \operatorname{Scope}(s_k) ~
    \Longleftrightarrow~&
    (x_i \in \tau_k) \land (c_i \in \Gamma_k)
    \\
    &\hspace{2.5em}{}\land (f_i = f_k).
\end{aligned}
\label{eq:state-scope-membership}
\end{equation}
\begin{definition}[Functional reliance on a latent state]
\label{def:state-reliance}
An episode \(e_i\) functionally relies on a candidate state \(s_k\), written
\(e_i \models s_k\), when the episode lies within the functional scope of
\(s_k\) and uses the state in realizing the target computation \(f_k\):
\begin{equation}
    e_i \models s_k
    ~\Longleftrightarrow~
    \bigl(e_i \in \operatorname{Scope}(s_k)\bigr)
    \land
    \bigl(a_k(e_i)>0\bigr).
    \label{eq:state-reliance}
\end{equation}
Here, \(a_k(e_i)\) denotes the contribution weight of \(s_k\) to the
representation used for \(f_k\), and
\(v_k(e_i)=a_k(e_i)\phi_k\) denotes the corresponding state-specific
component.
\end{definition}

\begin{hypothesis}[Lifted State Hypothesis]
\label{hyp:lifted-state}
An LLM realizes some recurring computations through reusable latent states.
A state \(s_k=(z_k,\phi_k)\) is \emph{lifted} when episodes within a
computation-relative type, context scope, and target computation
systematically satisfy \(e_i\models s_k\) and reuse the same value
\(\phi_k\). For each such episode, the state contributes
\(v_k(e_i)=a_k(e_i)\phi_k\) to the representation supporting the target
computation. A lifted state may be distributed, overlapping, graded, and
non-unique.
\end{hypothesis}

The relation \(e_i\models s_k\) holds when the episode lies within the
functional scope of \(s_k\) and activates the state in realizing the target
computation. Because multiple episodes may rely on the same state, modifying
\(s_k\) may induce coordinated changes across its functional scope.

For a fixed target computation \(f\), two episodes belong to the same
computational type when they have the same computation-relevant outcome:
\begin{equation}
    e_i \equiv_{\theta,f} e_j
    ~\Longleftrightarrow~
    f_{\theta}(e_i)=f_{\theta}(e_j).
    \label{eq:episode-equivalence}
\end{equation}
Because an episode includes its context, the same sample may belong to
different types as the context changes. For example, a bird may satisfy
\textsc{CanFly} when healthy but \textsc{CannotFly} when injured, even under
the same target computation. An episode may also participate in several computations. For example, a bird
may satisfy both \textsc{CanFly} and \textsc{CanRun} under distinct target
computations through different latent states. 

In this work, since computational equivalence is defined by equality of outcomes under a selected computation \(f\), we analyze lifted states relative to a single fixed \(f\). A group of episodes may nevertheless be jointly characterized by multiple computations, which we leave to future work.

\section{Non-Monotonic Revision through \\ Lifted States}
\label{sec:non-monotonic-revision}

Lifted-state sharing becomes a non-monotonic revision problem when the initial
evidence supports a broad parent rule but does not reveal a relevant subtype.
We first describe how such incomplete evidence can cause an unseen subtype to
inherit the parent computation, and then analyze why its later revision may
remain coupled to preserved parent members.

\subsection{Subtype Generalization from Incomplete Evidence}
\label{sec:subtype-generalization}

Let \(\mathcal{P}\subseteq\mathcal{X}\) be a task-level parent set and
\(\mathcal{S}\subsetneq\mathcal{P}\) a subtype that later requires a different
outcome. Let \(y^{+}\) be the parent outcome and
\(y^{-}\neq y^{+}\) the subtype outcome. Writing
\(
    \mathcal{X}_1
    =
    \operatorname{supp}_{\mathcal{X}}(\mathcal{D}_1)
\),
we assume that no subtype example appears in the initial data:
\begin{equation}
    \mathcal{X}_1\cap\mathcal{S}
    =
    \varnothing.
    \label{eq:subtype-incomplete-evidence}
\end{equation}
The observations are therefore consistent with both a coarse parent rule and
a refined subtype rule:
\begin{equation}
\begin{aligned}
    \text{coarse:}\quad
    &\mathcal{P}
    \mapsto
    y^{+},
    \\
    \text{refined:}\quad
    &\mathcal{P}\setminus\mathcal{S}
    \mapsto
    y^{+},
    \qquad
    \mathcal{S}
    \mapsto
    y^{-}.
\end{aligned}
\label{eq:initial-rule-ambiguity}
\end{equation}
Because the rules agree on all observed parent examples,
\(\mathcal{D}_1\) does not identify \(\mathcal{S}\) as a separate type.

Incomplete evidence alone does not determine which compatible rule is
learned. We consider learners with a \emph{default generalization bias}:
absent distinguishing evidence, learning favors the coarser extension of the
observed parent outcome. This resembles default reasoning, in which a general
conclusion is retained until defeating evidence appears
\cite{reiter1980logic}.

Let
\(
    e=(c,x,f)
\)
with
\(
    x\in\mathcal{P}\setminus\mathcal{S}
\),
and
\(
    e'=(c,x',f)
\)
with
\(
    x'\in\mathcal{S}
\).
Under the default extension,
\begin{equation}
    \widehat{y}_1(e)
    =
    \widehat{y}_1(e')
    =
    y^{+}.
    \label{eq:subtype-broad-generalization}
\end{equation}
If their observed \(f\)-relevant behavior agrees throughout \(\Gamma\), then
\(
    x\equiv_{\theta_1,\Gamma,f}x'
\).
The model has thus incorporated the unseen subtype into a coarser
computation-relative type through a defeasible inference from incomplete evidence.

Suppose the observed parent episodes
\(e_1,\ldots,e_n\) use a lifted state
\(s_k=(z_k,\phi_k)\), with activations
\begin{equation}
    a_{k,t}(e_i)
    =
    \sigma\!\left(
        z_{k,t}^{\top}q_t(e_i)
    \right),
    \qquad
    i=1,\ldots,n.
    \label{eq:nmr-parent-state-activation}
\end{equation}
Before parent learning, the unseen subtype episode \(e'\) may activate the
same state positively but less strongly:
\begin{equation}
    0
    <
    a_{k,0}(e')
    <
    \min_{1\leq i\leq n}
    a_{k,0}(e_i).
    \label{eq:weak-subtype-activation}
\end{equation}
Thus, the subtype is initially aligned with the parent state without yet
using it as strongly as the observed parent episodes.

To isolate a filter-mediated mechanism, hold the episode representations fixed
during the initial stage and define the filter change induced by parent
learning as
\begin{equation}
    \Delta_1 z_k
    :=
    z_{k,1}-z_{k,0}.
    \label{eq:initial-filter-change}
\end{equation}
For the subtype episode, let
\begin{equation}
    \ell_{k,t}(e')
    =
    z_{k,t}^{\top}q(e')
    \label{eq:subtype-activation-logit}
\end{equation}
denote its activation logit. The induced change is
\begin{equation}
\begin{aligned}
    \Delta_1\ell_k(e')
    &:=
    \ell_{k,1}(e')-\ell_{k,0}(e')
    \\
    &=
    q(e')^{\top}\Delta_1 z_k.
\end{aligned}
\label{eq:subtype-logit-transfer}
\end{equation}

\begin{proposition}[Subtype activation transfer]
\label{prop:subtype-activation-transfer}
If \(\sigma\) is strictly increasing and
\begin{equation}
    q(e')^{\top}\Delta_1 z_k
    >
    0,
    \label{eq:positive-subtype-transfer}
\end{equation}
then
\begin{equation}
    a_{k,1}(e')
    >
    a_{k,0}(e').
    \label{eq:subtype-activation-growth}
\end{equation}
\end{proposition}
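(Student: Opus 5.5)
The argument is a direct monotonicity computation on the activation logit. Throughout, we keep the standing assumption made just before the statement: the episode representation \(q(e')\) is held fixed during the initial stage, so \(q_0(e')=q_1(e')=q(e')\). Only the filter changes, by \(\Delta_1 z_k\) as in \eqref{eq:initial-filter-change}.

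First, write the post-learning logit via linearity of the inner product:
\[
\ell_{k,1}(e') = z_{k,1}^{\top}q(e') = \bigl(z_{k,0}+\Delta_1 z_k\bigr)^{\top}q(e') = \ell_{k,0}(e') + q(e')^{\top}\Delta_1 z_k .
\]
This is exactly the identity \eqref{eq:subtype-logit-transfer}. The hypothesis \eqref{eq:positive-subtype-transfer} then gives \(\ell_{k,1}(e')>\ell_{k,0}(e')\).

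Second, apply the activation definition from \eqref{eq:nmr-parent-state-activation} at \(t=0,1\). This gives \(a_{k,t}(e')=\sigma(\ell_{k,t}(e'))\). Because \(\sigma\) is strictly increasing, a strict inequality between arguments carries over to a strict inequality between values. Hence \(a_{k,1}(e')=\sigma(\ell_{k,1}(e'))>\sigma(\ell_{k,0}(e'))=a_{k,0}(e')\), which is \eqref{eq:subtype-activation-growth}.

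There is no real obstacle. The only point needing care is to state explicitly that \(q\) carries no time index, so that the change in the logit is attributable entirely to \(\Delta_1 z_k\). The weak-activation assumption \eqref{eq:weak-subtype-activation} is not needed for the inequality itself; it only places the result in context.
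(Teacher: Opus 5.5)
Your proof is correct and follows the same route as the paper's appendix proof: hold \(q(e')\) fixed, use linearity to get \(\ell_{k,1}(e')=\ell_{k,0}(e')+q(e')^{\top}\Delta_1 z_k>\ell_{k,0}(e')\), and carry the strict inequality through the strictly increasing \(\sigma\). You are also right that the weak-activation condition \eqref{eq:weak-subtype-activation} plays no role in the argument.
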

The proposition identifies a sufficient condition for parent learning to
increase the subtype's activation of \(s_k\); activation growth alone does not
establish that the subtype effectively uses the state. We therefore restrict
the subsequent analysis to the case in which, after learning
\(\mathcal{D}_1\),
\begin{equation}
    e\models s_k,
    \qquad
    e'\models s_k.
    \label{eq:shared-state-after-parent-learning}
\end{equation}
This condition permits different activation strengths and
episode-specific representations. It requires only that the same state make a
computation-relevant contribution to both episodes. The following subsection
examines the consequences if this shared dependence persists when
subtype-specific evidence is introduced. A formal analysis of subtype activation transfer is provided
in Appendix~\ref{appendix:theoretical-analysis}.

\subsection{Non-Monotonic Revision under Persistent Sharing}
\label{sec:revision-under-persistent-sharing}

The second-stage data \(\mathcal{D}_2\) introduce the subtype distinction
absent from \(\mathcal{D}_1\). Selective revision must change the inherited
subtype prediction while preserving the parent outcome:
\begin{equation}
\begin{aligned}
    \widehat{y}_2(e')
    &=
    y^{-}
    \neq
    \widehat{y}_1(e'),
    \\
    \widehat{y}_2(e)
    &=
    y^{+}
    =
    \widehat{y}_1(e).
\end{aligned}
\label{eq:desired-subtype-revision}
\end{equation}
Accordingly, the computational equivalence induced under incomplete evidence
must be refined:
\begin{equation}
    x\equiv_{\theta_1,\Gamma,f}x',
    \qquad
    x\not\equiv_{\theta_2,\Gamma,f}x'.
\label{eq:required-type-refinement}
\end{equation}
The revision is non-monotonic because subtype evidence defeats a prediction
previously supported by the default extension of the parent rule.

The central question is whether this distinction can be learned while the
parent and subtype continue to depend on the same lifted state. For either
episode \(\bar e\in\{e,e'\}\), define the contribution of \(s_k\) at stage
\(t\) as
\begin{equation}
\begin{aligned}
    v_{k,t}(\bar e)
    &:=
    a_{k,t}(\bar e)\phi_{k,t},
    \\
    \Delta v_k(\bar e)
    &:=
    v_{k,2}(\bar e)-v_{k,1}(\bar e).
\end{aligned}
\label{eq:revision-state-contribution}
\end{equation}
Suppose their activation discrepancy remains bounded before and after subtype
learning:
\begin{equation}
    \left|a_{k,t}(e)-a_{k,t}(e')\right|
    \leq
    \epsilon_{k,t},
    \quad
    t\in\{1,2\}.
\label{eq:persistent-activation-condition}
\end{equation}

\begin{proposition}[Persistent sharing bounds differential revision]
\label{prop:persistent-sharing-coupling}
Under Equation~\eqref{eq:persistent-activation-condition},
\begin{equation}
    \left\|\Delta v_k(e)-\Delta v_k(e')\right\|
    \leq
    \epsilon_{k,2}\|\phi_{k,2}\|
    +
    \epsilon_{k,1}\|\phi_{k,1}\|.
\label{eq:persistent-sharing-bound}
\end{equation}
\end{proposition}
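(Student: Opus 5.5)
The bound follows from expanding the difference of state contributions stage by stage and applying the triangle inequality. The one structural fact we need is that within a single stage both episodes share the same value \(\phi_{k,t}\), so their contributions differ only through activation. By the definition in Equation~\eqref{eq:revision-state-contribution},
\begin{equation}
\begin{aligned}
    \Delta v_k(e)-\Delta v_k(e')
    &=
    \bigl(v_{k,2}(e)-v_{k,2}(e')\bigr)
    -
    \bigl(v_{k,1}(e)-v_{k,1}(e')\bigr)
    \\
    &=
    \bigl(a_{k,2}(e)-a_{k,2}(e')\bigr)\phi_{k,2}
    -
    \bigl(a_{k,1}(e)-a_{k,1}(e')\bigr)\phi_{k,1}.
\end{aligned}
\end{equation}
The first step regroups the four terms by stage rather than by episode. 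The second step factors out the common value vector at each stage. This factoring is exactly where the lifted-state assumption of Hypothesis~\ref{hyp:lifted-state} enters: both episodes reuse the same \(\phi_k\).

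Next, apply the triangle inequality to the norm of this expression. Then use absolute homogeneity, \(\|\alpha\phi\|=|\alpha|\,\|\phi\|\), for each scalar activation gap. This gives
\[
    \left\|\Delta v_k(e)-\Delta v_k(e')\right\|
    \leq
    \left|a_{k,2}(e)-a_{k,2}(e')\right|\|\phi_{k,2}\|
    +
    \left|a_{k,1}(e)-a_{k,1}(e')\right|\|\phi_{k,1}\|.
\]
Substituting the persistent-activation condition, Equation~\eqref{eq:persistent-activation-condition}, for \(t=1\) and \(t=2\) yields Equation~\eqref{eq:persistent-sharing-bound}.

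No step here is a genuine obstacle. The only point needing care is the bookkeeping: organizing the difference by stage rather than by episode. Grouping by episode would instead produce terms such as \(a_{k,2}(e)\phi_{k,2}-a_{k,1}(e)\phi_{k,1}\), which the hypothesis does not control. The proof should also note that the residual \(r_i\) and the episode representations are deliberately excluded. The bound concerns only the component \(v_k\) routed through \(s_k\), so any selective revision must come from enlarging \(\epsilon_{k,t}\), that is, rerouting, or from contributions outside \(s_k\). No earlier proposition is needed beyond the definitions.
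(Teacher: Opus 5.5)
Your proof is correct and follows the paper's argument: regroup the difference by stage, factor out the common value vector at each stage, then apply the triangle inequality and absolute homogeneity, and substitute the activation-discrepancy bounds. The paper writes $\phi_{k,2}$ as $\phi_{k,1}+\Delta\phi_k$, so its bound shows $\|\phi_{k,1}+\Delta\phi_k\|$, which is the same as your $\|\phi_{k,2}\|$; also, the shared value is already built into the definition $v_{k,t}(\bar e)=a_{k,t}(\bar e)\phi_{k,t}$, so no separate appeal to the hypothesis is needed.
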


The proposition states that episodes maintaining similar activation of the
same state cannot receive substantially different changes through that state.
It concerns only the shared contribution \(v_k\), rather than the complete
representations or final predictions. Nevertheless, when the downstream
effect of \(s_k\) is similar for the two episodes, this component-level
coupling creates pressure for their outputs to move in the same direction.

Selective revision must either separate the subtype from the shared state or
control how the state changes across its episodes. For
\(\bar e\in\{e,e'\}\), define
\begin{equation}
\begin{aligned}
    \textcolor{routingblue}{\Delta q(\bar e)}
    &:=
    q_2(\bar e)-q_1(\bar e), \\
    \textcolor{filterorange}{\Delta z_k}
    &:=
    z_{k,2}-z_{k,1},
    \\
    \textcolor{valuepurple}{\Delta\phi_k}
    &:=
    \phi_{k,2}-\phi_{k,1}.
\end{aligned}
\label{eq:revision-component-changes}
\end{equation}
Writing
\[
    \beta_{k,1}(\bar e)
    :=
    \sigma'\!\left(
        z_{k,1}^{\top}q_1(\bar e)
    \right),
\]
the state change has the first-order decomposition
\begin{equation}
\begin{aligned}
    \Delta v_k(\bar e)
    \approx{}&
    \underbrace{
        \beta_{k,1}(\bar e)
        z_{k,1}^{\top}
        \textcolor{routingblue}{\Delta q(\bar e)}
        \phi_{k,1}
    }_{\text{episode-specific}}
    \\
    &+
    \underbrace{
        \beta_{k,1}(\bar e)
        q_1(\bar e)^{\top}
        \textcolor{filterorange}{\Delta z_k}
        \phi_{k,1}
    }_{\text{shared filter}}
    +
    \underbrace{
        a_{k,1}(\bar e)
        \textcolor{valuepurple}{\Delta\phi_k}
    }_{\text{shared value}}.
\end{aligned}
\label{eq:three-revision-channels}
\end{equation}

The episode-specific change
\(\textcolor{routingblue}{\Delta q(\bar e)}\) alters the representation that
each episode presents to the shared filter, thereby increasing or decreasing
its use of \(s_k\) independently of other episodes. The shared change
\(\textcolor{filterorange}{\Delta z_k}\) instead modifies the common
activation criterion across the lifted episodes. It does not directly change
the state's value, but adjusts which episodes receive that value and how
strongly, using distinctions already expressed in their representations. In
contrast, \(\textcolor{valuepurple}{\Delta\phi_k}\) changes the
output-relevant value itself. Episodes that remain active on \(s_k\) therefore
receive the same revised direction, scaled by their activation strengths.

Selective revision may thus separate the subtype through episode-specific
rerouting by \(\textcolor{routingblue}{\Delta q(\bar e)}\) or through shared
filter refinement by \(\textcolor{filterorange}{\Delta z_k}\). If these
changes do not sufficiently reduce the subtype's sharing with preserved parent
episodes, revising \(\textcolor{valuepurple}{\Delta\phi_k}\) propagates the
change across those episodes as well. The next section examines the resulting
tension between subtype revision and parent retention. Further details are
provided in Appendix~\ref{appendix:theoretical-analysis}.

%% file: scripts/experiments.tex
\section{Experiments}
\label{sec:experiments}

\subsection{NMR-Type Dataset}
\label{sec:nmr-type-benchmark}
\paragraph{Task design.}
We introduce the \textsc{NMR-Type Dataset}, a controlled two-stage task for
testing whether a model can revise a subtype while preserving previously
learned predictions. Let
\(
    P_k(x)\coloneqq\mathbb{I}[k\mid x]
\)
denote divisibility by \(k\). In \(W_1\), the model observes examples
supporting three task rules:
\begin{equation}
\begin{aligned}
    P_2(x)
    &\mapsto
    \texttt{positive},
    \qquad
    P_3(x)
    \mapsto
    \texttt{negative},
    \\
    P_5(x)
    &\mapsto
    \texttt{positive}.
\end{aligned}
\label{eq:w1-rules}
\end{equation}
Multiples of \(4\) are withheld from this stage. Since
\(P_4(x)\Rightarrow P_2(x)\), predicting \(\texttt{positive}\) for the
withheld group reflects generalization of the observed \(P_2\) rule.

In \(W_2\), the withheld subtype is assigned the revised rule
\begin{equation}
    P_4(x)
    \mapsto
    \texttt{negative},
\label{eq:w2-subtype-rule}
\end{equation}
while examples from the \(P_2\), \(P_3\), and \(P_5\) groups are replayed
with their original labels. The model must therefore revise
\(\mathcal{G}_4\) from positive to negative while preserving the labels of
\(\mathcal{G}_2\), \(\mathcal{G}_3\), and \(\mathcal{G}_5\). 
To keep the
modulo-defined groups disjoint, we exclude inputs that are multiples of more
than one group modulus. For example, when groups are defined by divisibility
by \(2\) and \(5\), their common multiples, such as \(10\), are excluded.
Exact disjoint
group definitions are provided in Appendix~\ref{appendix:3_datasets}.

\paragraph{Evaluation.}
We report accuracy separately for each task-defined group. For group
\(\mathcal{G}_j\) at stage \(t\), we define
\begin{equation}
    \operatorname{Acc}_{t}(\mathcal{G}_j)
    =
    \Pr\!\left[
        \widehat{y}_{\theta_t}(x)=y_{j,t}
        \mid x\in\mathcal{G}_j
    \right],
    \label{eq:nmr-group-accuracy}
\end{equation}
where \(y_{j,t}\) is its target label at that stage. 


\subsection{Learning Protocols}
\label{sec:experimental-protocols}

\paragraph{Sequential setup.}
We construct disjoint training and test pools from integers in
\(\{1,\ldots,1000\}\). Each stage contains \(32\) training examples with
balanced labels. Training proceeds sequentially as
\(W_1\rightarrow W_2\), without resetting the model parameters. During
\(W_2\), newly introduced \(\mathcal{G}_4\) examples are combined with replay
examples from the three earlier groups. We resample training examples within
each group across five seeds and report mean group-wise test accuracy.

\paragraph{Models and update methods.}
We evaluate
Gemma-3-\(4\mathrm{B}/12\mathrm{B}/27\mathrm{B}\)
\cite{gemmateam2025gemma3technicalreport},
Qwen3-\(4\mathrm{B}/14\mathrm{B}/32\mathrm{B}\)
\cite{yang2025qwen3technicalreport}, and
Llama-3.1-\(8\mathrm{B}\)
\cite{grattafiori2024llama3herdmodels}.
We consider full fine-tuning, LoRA fine-tuning
\cite{hu2022lora}, and in-context learning to examine whether revision and retention differ across model families, and scales.
Detailed experimental settings, including optimizer and adapter
settings, are reported in Appendix~\ref{appendix:experimental_settings}.

\begin{figure*}[t]
    \centering
    \includegraphics[width=1.0\linewidth]
    {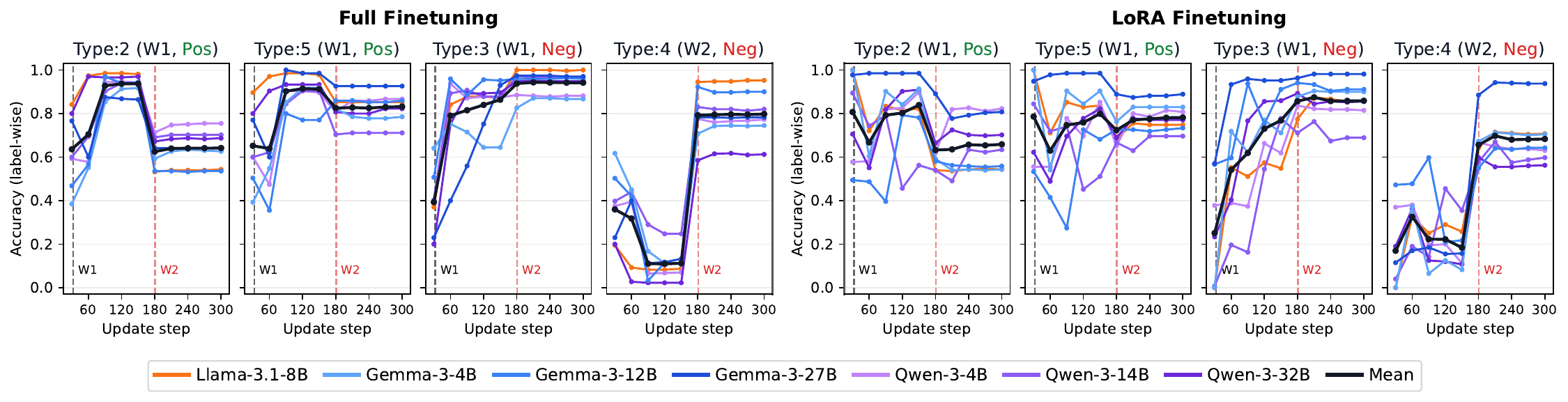}
\caption{Label-wise test accuracy during sequential \(W_1 \rightarrow W_2\)
learning with full fine-tuning and LoRA. The dashed line marks the introduction
of negative \(\mathcal{G}_4\) examples. Low \(\mathcal{G}_4\) accuracy during
\(W_1\) indicates generalization of the positive parent rule. After the
transition, \(\mathcal{G}_4\) improves sharply, while
\(\mathcal{G}_2\) and \(\mathcal{G}_5\) decline and partially recover.
The larger change on \(\mathcal{G}_2\) is consistent with revision of the
shared \(P_2\) rule, whereas the smaller change on the disjoint
\(\mathcal{G}_5\) control reflects broader interference.}
    \label{fig:main_fig_tuning}
\end{figure*}

\begin{figure*}[t]
    \centering
    \includegraphics[width=1.0\linewidth]
    {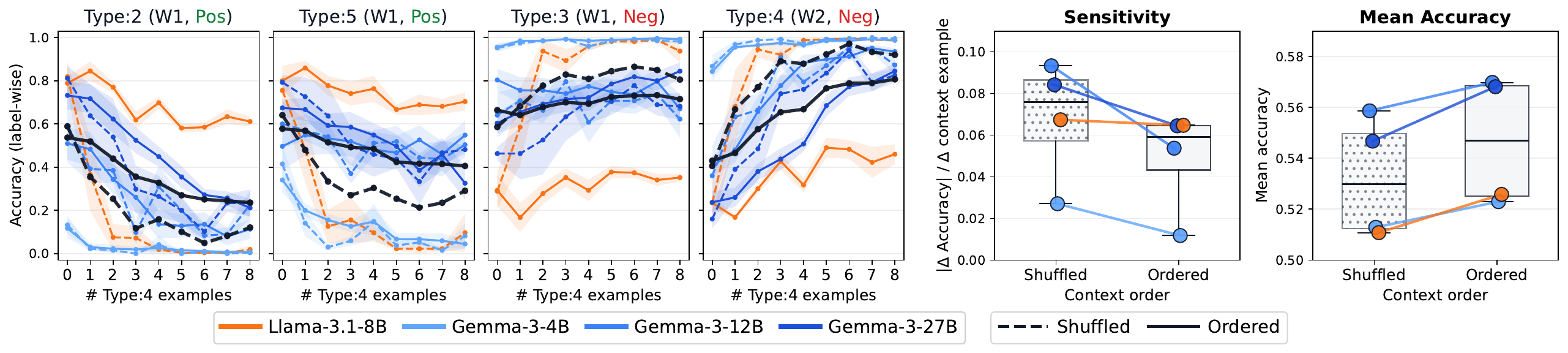}
\caption{In-context revision as \(0\)--\(8\) negative
\(\mathcal{G}_4\) demonstrations are added to the \(W_1\) context.
Dashed and solid curves denote shuffled and learning-stage-ordered contexts,
respectively. Increasing subtype evidence improves \(\mathcal{G}_4\) while
reducing accuracy on the preserved positive groups
\(\mathcal{G}_2\) and \(\mathcal{G}_5\). The last two panels report the sensitivity of accuracy per added demonstration and mean accuracy, respectively.
Shuffled contexts exhibit larger sensitivity, whereas ordered contexts achieve
higher mean accuracy.}
    \label{fig:main_fig_icl}
\end{figure*}
\section{Results}
\label{sec:results}

We examine whether the initial parent rule generalizes to the withheld subtype
and how the subsequent revision propagates across preserved groups. In
particular, we compare the preserved parent group \(\mathcal{G}_2\) with the
disjoint positive control \(\mathcal{G}_5\) to distinguish type-related
revision from broader same-label interference. This comparison tests whether
prediction changes follow the parent--subtype structure rather than the shared
output label alone.

\subsection{Parameter-Based Non-Monotonic Revision}
\label{sec:parameter-based-revision}

Figure~\ref{fig:main_fig_tuning} shows the dynamics under full fine-tuning and
LoRA. During \(W_1\), models generally learn the observed groups and extend
the positive \(P_2\) rule to the withheld subtype \(\mathcal{G}_4\). Because
\(\mathcal{G}_4\) is evaluated against its final negative label, this
generalization appears as low accuracy before \(W_2\).

After negative \(\mathcal{G}_4\) examples are introduced, most models improve
on the revised subtype while retaining \(\mathcal{G}_3\). Accuracy also
decreases on both preserved positive groups despite balanced replay, but not
to the same extent. The decline is larger on \(\mathcal{G}_2\), which
constitutes the preserved remainder of the \(P_2\) parent type, than on
\(\mathcal{G}_5\), which shares the positive label but is disjoint from the
parent--subtype relation. This asymmetry appears under both full fine-tuning
and LoRA.

The results therefore distinguish type-related revision from broader
same-label interference. Prediction changes are concentrated within the
computation-relative type \(\mathcal{G}_2\) implicated by the exception \(\mathcal{G}_4\), whereas changes to the
disjoint positive control (type \(\mathcal{G}_5\)) are smaller and model-dependent. Indeed,
Gemma-3-\(12\mathrm{B}\) improves on the control under full fine-tuning,
showing that its degradation is not required for learning the exception.
This structured revision pattern is behaviorally consistent with
Proposition~\ref{prop:persistent-sharing-coupling} and supports the refinement
of computation-relative types in LLMs.

\subsection{In-Context Non-Monotonic Revision}
\label{sec:icl-revision}
Figure~\ref{fig:main_fig_icl} shows that non-monotonic revision can also arise
through in-context learning. As performance improves on the revised
subtype, accuracy declines on the preserved parent group
\(\mathcal{G}_2\). The smaller decline on the disjoint
\(\mathcal{G}_5\) control is consistent with a broader bias toward the revised
label, a known source of instability in in-context learning
\cite{zhao2021calibrate}. However, the larger decrease
in \(\mathcal{G}_2\) suggests that label bias alone does not explain the
effect; revision is more strongly coupled through the parent--subtype
relation. 

The comparison between context organizations suggests that how demonstrations
are learned in context shapes the type structure induced by the model.
Learning-stage ordering yields higher overall performance and smaller accuracy
gradients than shuffling, indicating more stable subtype revision.

Across full fine-tuning, LoRA, and in-context learning, the scope of revision
consistently follows the model's computation-relative type structure. This
pattern provides behavioral evidence that LLMs form and refine such types as
they incorporate new evidence. Its recurrence across distinct update
mechanisms further suggests that coupled revision reflects a general property
of computation reuse rather than a method-specific artifact. Further discussion of these results is provided in
Appendix~\ref{appendix:experimental_results}.

%% file: scripts/conclusion.tex
\section{Discussion}
\label{sec:discussion}

\paragraph{Compression and shared computation.}
Our results highlight a tension between compression and revision. Neural
language models compress regularities from massive corpora into a finite
parameter space, allowing factual and relational information to be reused
without an explicit knowledge base
\cite{petroni-etal-2019-language,roberts-etal-2020-much}.
Lifted probabilistic inference makes this computational economy explicit:
interchangeable instances share one computation, but the group must be refined
once new evidence distinguishes its members
\cite{poole2003first,braz2005lifted,pmlr-v22-taghipour12}.
The same reuse that supports efficient generalization can therefore make later
revision difficult.

In LLMs, groupings are neither explicit nor localized. Interpretability
studies find knowledge and computation distributed across MLP memories,
knowledge neurons, and sparse representations
\cite{geva-etal-2021-transformer,dai-etal-2022-knowledge,
elhage2022toy,bricken2023monosemanticity,templeton2024scaling}.
Skill neurons, task and function vectors, and belief-state geometry suggest
that reusable computations arise through parameter learning or context
\cite{wang-etal-2022-finding-skill,hendel2023taskvectors,
todd2024functionvectors,shai2024beliefstate}.
The lifted-state hypothesis adds a functional criterion: episodes share a
state when treated as equivalent for a target computation.

\paragraph{Non-monotonic revision and representational plasticity.}
New evidence may show that only a subtype should change, but it does not tell
the model whether to modify a shared computation or create a separate one. 
Default reasoning and belief revision impose the same selective
requirement: contradicted conclusions should be withdrawn while unaffected
ones remain valid
\cite{reiter1980logic,doyle1979truth,alchourron1985logic}.
From the lifted-state perspective, revision therefore requires separating the
updated subtype from cases that should remain unchanged, through rerouting,
filter refinement, or a new state.

Separating a revised subtype from its parent type depends on representational
plasticity. Initial training examples may induce an overly broad grouping that
later subtype-specific supervision fails to refine
\cite{jones2022capturing,wan2025unveiling,
pmlr-v162-nikishin22a}. Continued training may further reduce the model's
capacity to reorganize it
\cite{dohare2024loss,sokar2023dormant}. Failed revision may therefore reflect
both coarse computational types and limited plasticity.

\paragraph{Reasoning-level revision.}
The same pattern extends to reasoning: a model may induce a rule, apply it to
unseen cases, and later narrow its scope when an exception appears. Prior work
examines these processes through rule induction, defeasible inference, and
belief revision
\cite{li2025patterns,rudinger2020thinking,wilie2024belief}. 
Viewed through lifted states, these results suggest examining whether premise
types that share a rule also reuse the same conclusion-producing computation,
and how that sharing changes when the rule is revised. Belief memory, editing,
and unlearning provide complementary ways to maintain consistency or update
selected behaviors
\cite{kassner2021beliefbank,yao2024large,liu2025rethinking}.
The lifted perspective offers a common lens for comparing these mechanisms 
how revised evidence propagates through rules, conclusions, and their applicability.

\paragraph{From model states to AI agent reasoning.}
For an AI agent, revising a belief may require more than changing a single model
output. The agent may also need to update the goals, plans, and actions that
were based on that belief, as emphasized by the Belief--Desire--Intention
framework \cite{rao1995bdi}. Modern LLM agents deepen this dependency by
repeatedly integrating observations, reasoning, memory, and action over long
horizons
\cite{yao2023react,yanfangzhou-etal-2025-m2pa,yang-etal-2025-coarse}.
The lifted-state perspective therefore asks whether new evidence creates the
computational distinctions needed for revision or merely reinforces an
existing shared interpretation. 
This issue is particularly important in multi-agent settings. An agent must
evaluate information that may rely on different premises and support
conclusions at different levels of abstraction. It must therefore determine
whether two claims concern the same type, a more specific subtype, or
fundamentally different cases before deciding which beliefs should be revised
\cite{jones2022capturing,wan2025unveiling,sharma2024towards}.

For AI alignment, revision must therefore be both responsive and selective:
reliable evidence should update the affected beliefs, plans, and actions
without destabilizing commitments supported by unaffected evidence
\cite{zhao2023large}. This requires the agent to represent
not only what should change, but also how far that change should propagate.


\paragraph{Limitations and future work.}
Our experiments provide behavioral evidence but do not directly characterize
how lifted states are represented or realized within the model. Moreover, the
controlled dataset abstracts away from biases and associations inherited
from pretraining and real-world data, which may shape how computational types
are initially formed and later revised. Future work should identify the
representational and causal basis of lifted states, examine their revision
under naturalistic data distributions, and extend the framework to
reasoning- and agent-level settings.

\section{Conclusion}
\label{sec:conclusion}
Understanding change in LLMs requires identifying the internal states that
support their predictions and connect them across inputs. This system-level
view echoes Quine's epistemological holism, which treats revision in relation
to an interconnected system rather than an isolated statement
\cite{quine1951two}. We introduced the Lifted State Hypothesis to study these
states and ask which inputs change together when LLMs are revised. The
hypothesis explains type-level generalization as the reuse of computations
across computation-relative types. We formalized each lifted state through an
activation filter and an output representation, and explained non-monotonic
revision through shared activation across a subtype and its parent type. When
the same state remains active for both, revising the subtype may also change
predictions for parent-type members whose computation remains valid. 
Experiments on the NMR-Type Dataset show that, across full fine-tuning,
LoRA, and in-context learning, models often learn the revised subtype while
also altering parent-type predictions that should remain unchanged. 
These findings support a generalization--revision duality: the same
state sharing that enables type-level transfer can also determine which
predictions change together.

\clearpage 

%% file: appendix/1_lifted_state_comparison.tex
\clearpage 
\section{Lifted States and Related Frameworks}
\label{appendix:1_lifted_states}

This section develops the two computational precedents underlying the
Lifted State Hypothesis: automata theory and lifted probabilistic inference.
Both frameworks replace repeated instance-level computation with a shared
state or factor defined over an equivalence class. They also make explicit
when such sharing remains valid and when additional distinctions must be
introduced. These frameworks serve as functional precedents rather than
architectural models of LLMs: automata and lifted models represent their
groups explicitly, whereas an LLM may realize corresponding computations
through  distributed latent states.

\subsection{Details on Automata}
\label{appendix:automata}

Let \(L\subseteq\Sigma^{*}\) be a language. For a prefix \(x\), its residual
language is
\[
L_x=\{w\in\Sigma^{*}:xw\in L\},
\]
which contains all continuations that lead from \(x\) to acceptance. Two
prefixes are equivalent when they induce the same acceptance decision under
every possible continuation:
\begin{equation}
x\equiv_L x'
~\Longleftrightarrow~
\mathbb{I}[xw\in L]
=
\mathbb{I}[x'w\in L],
~
\forall w\in\Sigma^{*}.
\label{eq:appendix_mn_equivalence}
\end{equation}
The Myhill--Nerode theorem characterizes when the resulting quotient is
finite \cite{hopcroft2001introduction}.

\begin{theorem}[Myhill--Nerode]
A language \(L\) is regular if and only if \(\equiv_L\) has finitely many
equivalence classes. In this case,
\[
Q=\Sigma^{*}/{\equiv_L},
\qquad
q_0=[\epsilon],
\qquad
F=\{[x]:x\in L\},
\]
together with \(\delta([x],a)=[xa]\), defines the unique minimal DFA
recognizing \(L\), up to isomorphism.
\end{theorem}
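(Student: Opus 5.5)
We prove the two directions separately and then establish minimality and uniqueness. The workhorse throughout is the right-invariance of \(\equiv_L\): if \(x\equiv_L x'\) then \(xa\equiv_L x'a\) for every \(a\in\Sigma\). This holds because \(L_{xa}=\{w: aw\in L_x\}\) depends only on \(L_x\). Right-invariance is what makes the transition map \(\delta([x],a)=[xa]\) well defined. Likewise, \(F=\{[x]:x\in L\}\) is well defined, since taking \(w=\epsilon\) in Equation~\eqref{eq:appendix_mn_equivalence} shows that equivalent prefixes agree on membership in \(L\).

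\emph{Regular \(\Rightarrow\) finite index.} Let \(A=(Q_A,\Sigma,\delta_A,q_A,F_A)\) be any DFA recognizing \(L\), and define \(x\mapsto\hat\delta_A(q_A,x)\). If two prefixes reach the same state, then for every continuation \(w\) the runs on \(xw\) and \(x'w\) end in the same state, so \(x\equiv_L x'\). Hence \(\equiv_L\) is coarser than the kernel of this map, and so \(|\Sigma^*/{\equiv_L}|\le|Q_A|<\infty\). The same inequality will later give minimality.

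\emph{Finite index \(\Rightarrow\) regular.} Build \(Q,q_0,F,\delta\) as in the statement; the first paragraph shows these are well defined. By induction on \(|w|\), \(\hat\delta([x],w)=[xw]\). Therefore \(w\) is accepted iff \([w]\in F\) iff \(w\in L\), again using that membership in \(L\) is constant on classes. So this finite DFA recognizes \(L\).

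\emph{Minimality and uniqueness.} By the first direction, every DFA for \(L\) has at least \(|Q|\) states, so the quotient automaton is minimal. For uniqueness, take a DFA \(A\) for \(L\) with exactly \(|Q|\) states. All of its states are reachable, since otherwise deleting the unreachable ones would give a smaller DFA. Define \(\pi:Q_A\to Q\) by sending \(\hat\delta_A(q_A,x)\) to \([x]\); this is well defined by the argument in the first direction. The map \(\pi\) is surjective and \(|Q_A|=|Q|\), so it is a bijection. It commutes with transitions because \(\hat\delta_A(q_A,xa)\mapsto[xa]\), sends the start state to \([\epsilon]\), and preserves acceptance. So \(\pi\) is an isomorphism.

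The main obstacle is making sure the well-definedness claims are justified rather than assumed: right-invariance for \(\delta\), the \(w=\epsilon\) case for \(F\), and the kernel-refinement argument for \(\pi\). Once these are in place, everything else is routine induction and counting.
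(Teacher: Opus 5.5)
The paper does not prove this theorem; it states Myhill--Nerode as a standard result and defers to the citation of Hopcroft et al., so there is no in-paper argument to compare against. Your proposal is the classical proof and is correct and complete. Right-invariance and the \(w=\epsilon\) case make \(\delta\) and \(F\) well defined. The kernel of \(x\mapsto\hat\delta_A(q_A,x)\) refines \(\equiv_L\), which gives both finite index and the lower bound \(|Q_A|\ge|Q|\). On a minimal DFA, where every state is reachable, the induced map \(\pi\) is a well-defined bijection that respects the start state, transitions, and acceptance. For acceptance, the isomorphism needs the two-sided form \(\hat\delta_A(q_A,x)\in F_A\iff x\in L\iff[x]\in F\), which your argument already provides.
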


Each state of the minimal automaton therefore represents a class of prefixes
that are indistinguishable for the future acceptance computation. Once two
prefixes reach the same state, their surface forms and previous transition
paths no longer matter for that computation.

\paragraph{Example: parity computation.}
Consider the language
\[
L_{\mathrm{even}}
=
\left\{
x\in\{0,1\}^{*}
:
\#_{1}(x)\equiv 0 \pmod 2
\right\},
\]
where \(\#_{1}(x)\) is the number of \(1\)s in \(x\). Its minimal DFA has
two states: \(q_{\mathrm E}\), representing even parity, and
\(q_{\mathrm O}\), representing odd parity. The initial and accepting state
is \(q_{\mathrm E}\). Reading \(0\) preserves the current state, whereas
reading \(1\) switches between the two states.

For example, the input \(x=10110\) is processed as
\[
q_{\mathrm E}
\xrightarrow{1}q_{\mathrm O}
\xrightarrow{0}q_{\mathrm O}
\xrightarrow{1}q_{\mathrm E}
\xrightarrow{1}q_{\mathrm O}
\xrightarrow{0}q_{\mathrm O}.
\]
The automaton therefore ends in the non-accepting state
\(q_{\mathrm O}\), meaning that \(10110\) contains an odd number of
\(1\)s:
\[
10110\notin L_{\mathrm{even}}.
\]
Appending \(w=1\) changes the parity and returns the automaton to the
accepting state:
\[
q_{\mathrm O}\xrightarrow{1}q_{\mathrm E},
\qquad
101101\in L_{\mathrm{even}}.
\]

The shorter prefix \(x'=1\) also reaches \(q_{\mathrm O}\). Consequently,
the two prefixes produce the same acceptance result after any continuation:
\[
10110w\in L_{\mathrm{even}}
\quad\Longleftrightarrow\quad
1w\in L_{\mathrm{even}},
\qquad
\forall w\in\Sigma^{*}.
\]
Therefore, \(10110\equiv_{L_{\mathrm{even}}}1\). Their shared state retains
only the parity information required to determine acceptance after future
input symbols.

\paragraph{Transformers and finite-state computation.} Prior work suggests a functional rather than structural correspondence between Transformers and finite automata. Fixed-depth self-attention has limitations on some regular languages, while Transformers can learn or constructively simulate finite-state computations under suitable architectural and length-dependent conditions \cite{hahn-2020-theoretical,bhattamishra-etal-2020-ability, liu2023transformers}. Automata therefore provide a precedent for computation-relative state reuse, not a claim that Transformers are literally DFAs.

\paragraph{Implications for LLMs.}
The automata suggests several implications for
computation-relative states in LLMs.

\begin{enumerate}
    \item \textbf{Existence of computational states.}
    Automata show that distinct inputs can share a state that retains the
    information required for a target computation. By analogy, an LLM may
    realize such computational states through neurons, features, circuits, or
    distributed combinations of components.

    \item \textbf{Target-relative organization of states.}
    The distinctions preserved by a state are determined by the target
    computation and its output behavior. Computational states are therefore
    not intrinsic categories of inputs, but task-relative organizations formed
    according to which differences matter for producing the target output.

    \item \textbf{Revision through input--state reorganization.}
    When a subset requires different behavior, revision must change its routing
    or state membership. Unlike in automata, these relations are continuous,
    distributed, and implicit in LLMs.
    
\end{enumerate}

The correspondence is functional rather than structural. Automata partition
inputs into exact finite states. LLMs may realize comparable states in
continuous and distributed forms that vary with context and target computation.


\subsection{Details on Lifted Probabilistic Inference}
\label{appendix:lifted_inference}

\paragraph{Overview.}
We first describe ordinary probabilistic inference at the level of individual
random variables and then explain how lifted inference reuses repeated parts
of that computation. The overall procedure is as follows:
\begin{enumerate}
    \item \textbf{Construct the probabilistic model.}
    Define random variables and local factors whose product represents their
    joint distribution.

    \item \textbf{Form the ground-level computation.}
    Replace each logical variable with a concrete domain individual. The
    resulting object-specific variables and factors are called
    \emph{ground} variables and factors
    (e.g., replacing \(B\) with \(penguin\) gives
    \(\operatorname{CanFly}(penguin)\)).

    \item \textbf{Compute the target probability.}
    Condition on the observed evidence and marginalize variables that are not
    part of the query.

    \item \textbf{Lift repeated computation.}
    Identify ground factors that are interchangeable under the current
    evidence and query. Evaluate their shared computation collectively rather
    than processing each grounding separately.

    \item \textbf{Refine when necessary.}
    If new evidence distinguishes members of a lifted group, split or shatter
    the group before continuing inference.
\end{enumerate}

A probabilistic model is conventionally written using generic random-variable
names such as \(P(X,Y,Z)\). To distinguish the roles played by variables in
the computation, we instead write
\[
P(X,C,F),
\]
where \(X\) describes the instance, \(C\) represents the current context or
observed evidence, and \(F\) contains the variables queried by the target
computation. The corresponding inference problem is
\[
P(F\mid X=x,C=c).
\]
This notation introduces no additional probabilistic assumption. It only
organizes variables according to their roles in the selected computation.
The same variable may therefore serve as evidence in one query and as a
target variable in another.

\paragraph{Factors and parfactors.}
A ground random variable is fully instantiated and contains no remaining
logical variables. Let
\(\mathbf V=(V_1,\ldots,V_m)\) be a tuple of ground random variables.
For each \(V_i\), \(\operatorname{Val}(V_i)\) denotes the set of values that
the variable can take. Their joint assignment space is
\[
\operatorname{Val}(\mathbf V)
=
\operatorname{Val}(V_1)\times\cdots\times\operatorname{Val}(V_m).
\]
A ground factor is a non-negative function over this space:
\[
\psi_r:
\operatorname{Val}(\mathbf V_r)
\rightarrow
\mathbb R_{\geq 0}.
\]
For an assignment \(\omega=(x,c,f)\), a factorized probabilistic model has
the form
\begin{equation}
P(x,c,f)
=
\frac{1}{Z}
\prod_r
\psi_r\!\left(\omega_{\mathbf V_r}\right),
\label{eq:appendix_ground_factorization}
\end{equation}
where \(\mathbf V_r\) is the scope of factor \(\psi_r\),
\(\omega_{\mathbf V_r}\) is the assignment restricted to that scope, and
\(Z\) is the normalizing constant. A ground factor thus describes one
fully instantiated local computation.

A parfactor compactly represents a family of ground factors through a
parameterized template:
\[
g
=
\left\langle
\mathbf A_g,\phi_g,\kappa_g
\right\rangle.
\]
Here, \(\mathbf A_g\) is a tuple of parameterized random variables,
\(\phi_g\) is their shared potential, and \(\kappa_g\) constrains the
admissible groundings of their logical variables. A substitution \(\theta\)
satisfying \(\theta\models\kappa_g\) replaces the logical variables with
domain constants and produces the ground scope \(\mathbf A_g\theta\).
A parfactor model \(\mathcal G\) therefore defines
\begin{equation}
P(x,c,f)
=
\frac{1}{Z}
\prod_{g\in\mathcal G}
\prod_{\theta\models\kappa_g}
\phi_g\!\left(
\omega_{\mathbf A_g\theta}
\right).
\label{eq:appendix_parfactor_factorization}
\end{equation}
A ground factor specifies one instantiated computation, whereas a parfactor
reuses one parameterized computation across all groundings permitted by its
constraint.

This shared template does not by itself guarantee that every grounding can
be processed together. Lifted inference groups groundings only when they
remain interchangeable under the current evidence, constraints, and query.
It then evaluates their common factor once and retains the number of
groundings represented by that computation.

\begin{proposition}[Lifted factor reuse]
\label{prop:appendix_lifted_factor_reuse}
Fix \(X=x\) and \(C=c\). Suppose the target-dependent factor product can be
partitioned into lifted groups \(\mathcal T\), such that every grounding in
\(\tau\in\mathcal T\) contributes the same factor
\(\psi_\tau(f;x,c)\). Let \(n_\tau=|\tau|\). Then
\begin{equation}
P(F=f\mid X=x,C=c)
=
\frac{
\displaystyle
\prod_{\tau\in\mathcal T}
\psi_\tau(f;x,c)^{n_\tau}
}{
\displaystyle
\sum_{f'\in\operatorname{Val}(F)}
\prod_{\tau\in\mathcal T}
\psi_\tau(f';x,c)^{n_\tau}
}.
\label{eq:appendix_lifted_probability}
\end{equation}
\end{proposition}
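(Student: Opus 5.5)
The plan is to start from the parfactor factorization in Equation~\eqref{eq:appendix_parfactor_factorization} and write the conditional as a ratio of joints. By definition of conditional probability,
\[
P(F=f\mid X=x,C=c)=\frac{P(x,c,f)}{\sum_{f'\in\operatorname{Val}(F)}P(x,c,f')}.
\]
Both numerator and denominator carry the same normalizer \(1/Z\), which cancels. This leaves the unnormalized factor products, evaluated at \(f\) and summed over \(f'\).

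Next, I will split the full ground product into two parts. The first part collects factors whose scopes contain a variable of \(F\); this is the target-dependent product of the hypothesis. The second part collects factors whose scopes involve only \(X\) and \(C\). Since \(x\) and \(c\) are fixed, the second part is a constant \(K(x,c)\) that does not depend on \(f\) or \(f'\). It therefore appears identically in the numerator and in every summand of the denominator, and it cancels. We need \(K(x,c)>0\), which holds whenever \(P(X=x,C=c)>0\); this is the standing assumption under which the conditional is defined.

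For the target-dependent product, I invoke the hypothesis that it is partitioned by \(\mathcal T\). By commutativity and associativity of multiplication, the product can be regrouped as \(\prod_{\tau\in\mathcal T}\prod_{\text{groundings }r\in\tau}\psi_r\). Each grounding in \(\tau\) contributes the same value \(\psi_\tau(f;x,c)\), so the inner product is exactly the situation of Proposition~\ref{prop:lifted-sufficiency-refinement}, Equation~\eqref{eq:lifted-group-contribution}. It equals \(\psi_\tau(f;x,c)^{n_\tau}\). Applying the same argument with \(f'\) in place of \(f\) inside the sum gives Equation~\eqref{eq:appendix_lifted_probability}.

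The main point needing care is that the partition \(\mathcal T\) must be the same for every \(f'\) in the denominator. Otherwise the regrouping would not be uniform across summands. I will read the hypothesis as fixing \(\mathcal T\) independently of the query value, with only the factor values \(\psi_\tau(f';x,c)\) varying. Under that reading the argument is just a regrouping of a finite product. Positivity of the denominator follows from the well-definedness of the conditional.
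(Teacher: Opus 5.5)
Your proposal is correct and follows the same route as the paper, whose proof simply collects identical ground factors into powers \(\psi_\tau(f;x,c)^{n_\tau}\) and normalizes over \(\operatorname{Val}(F)\). The cancellation of \(1/Z\) and of the \(f\)-independent factors, the positivity of the denominator, and the requirement that \(\mathcal T\) not depend on the query value are details the paper's one-line argument leaves implicit.
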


The proposition follows by collecting identical ground factors and
normalizing over the possible assignments of \(F\). The pair
\((\psi_\tau,n_\tau)\) is therefore sufficient for the repeated contribution
of group \(\tau\).

\paragraph{Example: lifted computation over birds.}
Let
\[
\phi_{\mathrm{bird}}(X_B,C_B,F_B)
\]
be a potential shared across birds, where \(X_B\) represents the bird type,
\(C_B\) represents whether the bird has wings, and \(F_B\) represents its
flying and swimming behavior.

Consider three penguins and three eagles:
\[
\mathcal P=\{p_A,p_B,p_C\},
~
\mathcal E=\{e_A,e_B,e_C\},
~
\mathcal B=\mathcal P\cup\mathcal E.
\]
For each bird \(b\in\mathcal B\), define
\[
X_b
=
\bigl(
\operatorname{Penguin}(b),
\operatorname{Eagle}(b)
\bigr),
\qquad
C_b=\operatorname{HasWing}(b),
\]
and
\[
F_b
=
\bigl(
\operatorname{CanFly}(b),
\operatorname{CanSwim}(b)
\bigr).
\]
These variables form the parfactor
\[
g_{\mathrm{bird}}
=
\left\langle
\phi_{\mathrm{bird}},
(X_B,C_B,F_B),
B\in\mathcal B
\right\rangle.
\]
Substituting each bird for \(B\) produces six ground factors that share the
same potential \(\phi_{\mathrm{bird}}\).

Assume that every bird has wings. Penguins are non-flying swimmers, whereas
eagles are flying non-swimmers:
\[
\begin{array}{c|c|c|c}
 & X_b & C_b & F_b\\
\hline
b\in\mathcal P & (1,0) & 1 & (0,1)\\
b\in\mathcal E & (0,1) & 1 & (1,0)
\end{array}
\]
Thus, every penguin contributes the same factor value
\[
\phi_{\mathrm{bird}}\bigl((1,0),1,(0,1)\bigr),
\]
and every eagle contributes
\[
\phi_{\mathrm{bird}}\bigl((0,1),1,(1,0)\bigr).
\]
The six birds can therefore be represented by two lifted groups:
\[
\tau_{\mathrm P}=\{p_A,p_B,p_C\},
\qquad
\tau_{\mathrm E}=\{e_A,e_B,e_C\}.
\]
Their six ground-factor contributions reduce to
\begin{equation}
\phi_{\mathrm{bird}}\bigl((1,0),1,(0,1)\bigr)^{3}
\phi_{\mathrm{bird}}\bigl((0,1),1,(1,0)\bigr)^{3}.
\label{eq:appendix_bird_lifted_factors}
\end{equation}
The shared potential specifies the common computation, while each exponent
records the number of corresponding ground factors.

Now extend the context with
\(\operatorname{Injured}(b)\), and suppose that
\[
\operatorname{Injured}(p_A)=1,
\qquad
\operatorname{Injured}(p_B)
=
\operatorname{Injured}(p_C)
=
0.
\]
For computations in which injury affects the target variables, Penguin A is
no longer interchangeable with Penguins B and C. The penguin group must be
shattered into
\[
\tau_{\mathrm P}
\longrightarrow
\begin{cases}
\tau_{\mathrm{P,I}}=\{p_A\},\\
\tau_{\mathrm{P,U}}=\{p_B,p_C\}.
\end{cases}
\]
If \(\psi_{\mathrm{P,I}}\) and \(\psi_{\mathrm{P,U}}\) denote the resulting
injured and uninjured potentials, the former contribution
\[
\psi_{\mathrm P}(u)^3
\]
is replaced by
\[
\psi_{\mathrm{P,I}}(u)
\psi_{\mathrm{P,U}}(u)^2.
\]
The eagle group remains unchanged. New evidence has thus refined the previous
computational equivalence while preserving lifted computation among the
unaffected penguins.
\paragraph{Revision of lifted computation.}
A lifted probabilistic model explicitly represents both a shared computation
and its assigned ground instances. Let
\[
    g=\langle \mathbf A,\phi,\kappa\rangle,
\]
where \(\kappa\) specifies the groundings sharing \(\phi\). Revision can
proceed in two ways:

\begin{itemize}
    \item \textbf{State-preserving remapping.}
    The model retains \(\phi\) but changes the sample--factor mapping. New
    evidence may remove a grounding from one group or assign it to another,
    changing which instances reuse each computation.

    \item \textbf{Lifted-state reconstruction.}
    The model refines the shared organization as
    \[
        \langle \mathbf A,\phi,\kappa\rangle
        \longrightarrow
        \left\{
            \langle \mathbf A_r,\phi_r,\kappa_r\rangle
        \right\}_{r=1}^{R},
        \qquad
        \bigsqcup_{r=1}^{R}\kappa_r=\kappa.
    \]
    This reconstruction may involve:
    \begin{itemize}
        \item \emph{Sample splitting:}
        partitioning or shattering previously interchangeable groundings
        \cite{braz2005lifted,milch2008lifted,pmlr-v22-taghipour12},
        initially without changing their potential.

        \item \emph{Context reconstruction:}
        changing the factor scope or adding an evidence-dependent factor so
        that a previously omitted distinction affects the computation.

        \item \emph{Value revision:}
        replacing \(\phi\) with group-specific potentials \(\phi_r\).
    \end{itemize}
\end{itemize}

These operations may be combined by splitting a group, adding a contextual
dependency, and assigning the distinguished subgroup a different potential.

From the perspective of lifted inference, the closest analogue of revision is
a symmetry-preserving refinement. Poole treats distinguished individuals
separately while retaining collective reasoning over the remaining population,
and FOVE separates groundings only when current evidence or query conditions
invalidate their interchangeability
\cite{poole2003first,braz2005lifted}. Splitting and shattering therefore
expose distinctions required for exact inference without discarding valid
sharing among unaffected instances.

Later methods extend this principle by preserving lifting within increasingly
expressive intermediate representations. Counting formulas retain
interchangeability within potentials, while richer constraint languages
represent more precise groups without reducing them to singleton groundings
\cite{milch2008lifted,pmlr-v22-taghipour12}. Thus, refinement should preserve
the largest groups that remain interchangeable under the revised evidence and
target computation; full grounding is required only when no valid lifted
organization remains.

\paragraph{Information required for selective revision.}
Both forms of revision depend on the previous mapping between ground instances
and shared computations. Remapping requires knowing which instances should
leave or enter an existing group. Splitting requires identifying both the
subgroup to revise and the remaining members whose computation should be
preserved. Changing a potential requires knowing its full grounding scope,
because every instance still assigned to that potential receives the revised
contribution. Lifted models retain this information explicitly through
constraints, factor scopes, grounding sets, and counts. Selective revision is
therefore not determined by the new evidence alone; it also depends on the
existing organization of samples.

\paragraph{Implications for LLMs.}
The same distinction suggests two broad revision mechanisms in LLMs. A model
may preserve a lifted state and alter which episodes activate it by changing
their representations, filters, or routing. Alternatively, it may reconstruct
the lifted organization by separating a subtype, composing the computation
with additional context, or changing the effective value \(\phi_k\).

Unlike an explicit lifted model, however, an LLM does not normally maintain an
addressable record of which earlier episodes formed. Data-attribution methods can estimate the training samples that
influence a prediction, while continual-learning methods use parameter
constraints or episodic replay to preserve earlier behavior
\cite{pmlr-v70-koh17a,kirkpatrick2017overcoming,lopez2017gradient}.
These methods provide partial access to prior learning but do not recover an
explicit episode--state membership relation.

This limitation is also reflected in model editing. Local editing methods
seek to modify a selected association while preserving unrelated behavior
\cite{mitchell2022fast,meng2022locating}, yet edits may fail to propagate to
related predictions or spread to instances that should remain unchanged
\cite{cohen2024ripple,qin2024messy}. Rule-level editing similarly emphasizes
that revision requires identifying the broader set of instances governed by
the edited computation
\cite{zhou2025ruleedit}. Replay can partially recover this scope by
re-exposing previous samples, whereas updates without replay must infer it
indirectly from current parameters and activations.


\begin{figure*}[t!]
    \centering
    \includegraphics[width=1.0\linewidth]{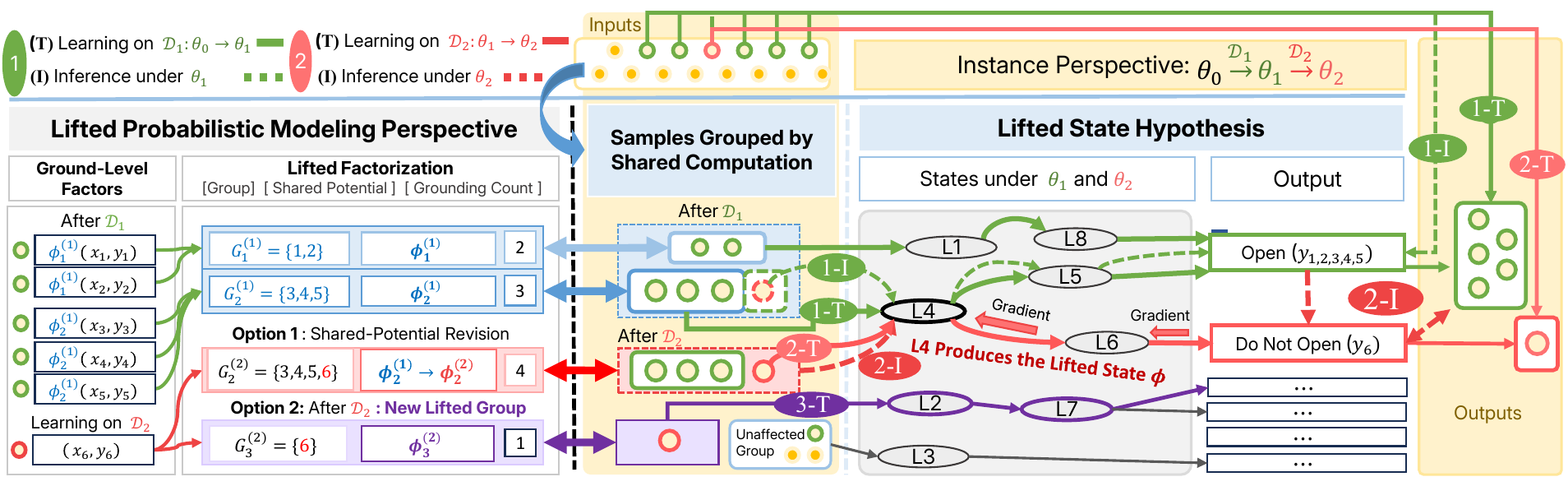}
    \caption{
Functional correspondence between lifted probabilistic modeling and the
Lifted State Hypothesis under sequential learning.
\(1\text{-T}\) denotes learning on \(\mathcal D_1\),
\(\theta_0\!\rightarrow\!\theta_1\), and \(1\text{-I}\) denotes inference
under \(\theta_1\). Likewise, \(2\text{-T}\) denotes learning on
\(\mathcal D_2\), \(\theta_1\!\rightarrow\!\theta_2\), and \(2\text{-I}\)
denotes inference under \(\theta_2\). Learning \(\mathcal D_2\) may revise a
shared computation and affect its existing group, or form a new group for
\(x_6\) while preserving the others. Solid and dashed arrows indicate
learning and inference, respectively.
}
    \label{fig:main_fig_overview}
\end{figure*}
\input{appendix/table-comparison}

\subsection{Detailed Comparison with LLMs}
\label{appendix}

Figure~\ref{fig:main_fig_overview} illustrates the central learning dynamics
of the Lifted State Hypothesis, while
Table~\ref{tab:type-level-inference} compares its computational organization
with automata and lifted probabilistic inference. The comparison is organized
through six questions---who, where, when, what, how, and why---that distinguish
the cases being grouped, the state representation, the context of
equivalence, the criterion for grouping, the reuse mechanism, and the target
computation. We first describe the vector realization and then examine how
shared computation changes under sequential learning.

\paragraph{Vector realization.}
An episode (e=(c,x,f)) is associated with an episode-conditioned vector
\(q_\theta(e)\). A lifted state represents a reusable component of this vector,
rather than its complete representation. We write
\[
s_k=(z_k,\phi_k),
\]
where the filter \(z_k\) determines whether and how strongly an episode
recruits the state, and the value \(\phi_k\) specifies its contribution to the
target computation. Multiple episodes may therefore reuse the same value with
different activation strengths while retaining additional episode-specific
information.

This realization differs from a finite automaton state, which completely
summarizes the information from a prefix that is relevant to future
acceptance. A lifted state in an LLM need not exhaust the model's current
representation or uniquely determine its output. It is one component within a
larger vector computation and may coexist with other shared or
episode-specific components.

The relation to lifted probabilistic inference is closer at the level of
reuse. A parfactor assigns one potential to multiple groundings that are
equivalent under the selected relational structure. Similarly, a lifted state
provides one shared contribution to multiple episodes belonging to a
computation-relative type. The difference is that parfactor membership and
potential reuse are explicitly defined by a probabilistic model, whereas the
corresponding grouping in an LLM is latent, graded, and learned from data.

\paragraph{Sequential learning and revision.}
After learning \(\mathcal D_1\), an unobserved subtype may be routed through
the same latent state as its parent type and therefore inherit the parent
output during inference. This is the vector analogue of assigning multiple
cases to the same automaton state or evaluating multiple groundings through
the same parfactor potential.

When \(\mathcal D_2\) later provides subtype-specific supervision, the model
must revise this organization. If the subtype and preserved parent episodes
continue to recruit the same state, changing its filter or value may move
their computations together. From the lifted probabilistic perspective, this
resembles modifying a shared potential while leaving the corresponding
groundings in one lifted group.

Localized revision instead requires an effective distinction. The model may
reroute the subtype, form a new state, or introduce a compensating
episode-specific computation. This corresponds most closely to refining an
automaton state partition or splitting a lifted probabilistic group so that
the revised cases no longer share the same computation with the preserved
cases.

\paragraph{Comparison across frameworks.}
All three frameworks group cases that are equivalent relative to a selected
future computation and reuse a common state or computation across those cases.
They differ, however, in what is represented and how explicitly the grouping
is defined. Automata use discrete states and exact future-language
equivalence. Lifted probabilistic inference uses symbolic variables,
parfactors, and grounding counts. The Lifted State Hypothesis instead concerns
latent filter--value components embedded in a distributed vector
representation. Its main claim is therefore not merely that LLM inputs form
representational clusters, but that multiple episodes may functionally depend
on a common computational component whose revision can coordinate their
behavior.

%% file: appendix/table-comparison.tex

\newcolumntype{L}[1]{%
  >{\raggedright\arraybackslash
    \hsize=#1\hsize
    \linewidth=\hsize}X%
}

\newcommand{\dimhead}[2]{%
  {\footnotesize
    \textbf{#1}\par
    \vspace{-1pt}
    \textit{#2}%
  }%
}

\begin{table*}[t!]
\centering
\caption{Comparison of type-level computation across automata, lifted
probabilistic inference, and language models. The table characterizes the
common phenomenon of sharing computation across recurring cases that are
treated as equivalent for a target. The Who--Why dimensions describe how each
framework identifies these cases and realizes computational reuse.}
\label{tab:type-level-inference}

\setlength{\tabcolsep}{3.5pt}
\renewcommand{\arraystretch}{1.10}
\setlength{\aboverulesep}{0pt}
\setlength{\belowrulesep}{2.5pt}

\small

\begin{tabularx}{\textwidth}{
    @{}
    L{0.65}
    L{1.00}
    L{1.05}
    L{1.25}
    L{1.20}
    L{0.85}
    @{}
}
\toprule

\dimhead{Framework}{Who}
&
\dimhead{State realization}{Where}
&
\dimhead{Equivalence scope}{When}
&
\dimhead{Type criterion}{What}
&
\dimhead{Reuse mechanism}{How}
&
\dimhead{Computation}{Why}
\\

\midrule

\textbf{Automaton}
&
Finite automaton state
&
All task-relevant continuations
&
Prefixes with identical future acceptance behavior
&
Map each equivalence class to one state
&
Decide acceptance
\par
\(f_L(x;w)\)
\\

\addlinespace[3pt]

\textbf{Lifted probabilistic model}
&
Parfactor and grounding count
&
Fixed evidence, constraints, and query
&
Interchangeable groundings with identical factor contributions
&
Evaluate one shared potential and reuse it by count
&
Compute
\par
\(P(F\mid X,C)\)
\\

\addlinespace[3pt]

\textbf{Language model}
&
Latent filter--value component
&
Fixed \(\theta\), context scope \(\Gamma\), and computation \(f\)
&
Samples indistinguishable in observed \(f\)-relevant behavior
&
Reuse \(\phi_k\) through episode-dependent activation \(a_k(e)\)
&
Support
\par
\(f_\theta(c,x)\)
\\

\bottomrule
\end{tabularx}
\end{table*}

%% file: appendix/2_theory.tex
\section{Neural Basis of the Lifted State Hypothesis}
\label{appendix:lifted_state_details}

The main text represents a lifted state through a minimal filter--value
decomposition. This abstraction is motivated by the vector-valued structure
of Transformer computation and by evidence that recurring tasks, functions,
and belief states can be associated with directions or low-dimensional
geometry in activation space
\cite{hendel2023taskvectors,todd2024functionvectors,shai2024beliefstate}.
It does not require a lifted state to correspond to one neuron, one sparse
feature, or a uniquely identifiable vector.

\subsection{Filter--Value Structure}
\label{appendix:filter_value_basis}

Let \(h\in\mathbb{R}^{d}\) enter a standard Transformer feed-forward block.
If \(k_j^\top\) is the \(j\)-th row of \(W_{\mathrm{in}}\) and \(v_j\) is the
\(j\)-th column of \(W_{\mathrm{out}}\), then
\begin{equation}
    \operatorname{MLP}(h)
    =
    \sum_{j=1}^{d_{\mathrm{ff}}}
    \sigma\!\left(k_j^\top h\right)v_j.
\label{eq:appendix_mlp_filter_value}
\end{equation}
Each term combines an input-dependent recruitment coefficient with a shared
output vector. This structure motivates key--value interpretations of
Transformer MLPs \cite{geva-etal-2021-transformer}; gated MLPs retain the same
organization with more general coefficients.

A recurring computation need not align with a single architectural neuron.
The joint contribution of several neurons, features, heads, or layers may be
locally summarized as
\begin{equation}
    \sum_{j\in\mathcal{J}_k}
    \alpha_j\!\left(h(e)\right)v_j
    \approx
    a_k(e)\phi_k.
\label{eq:appendix_aggregate_state}
\end{equation}
Here, \(a_k(e)\) summarizes recruitment and \(\phi_k\) summarizes the effective
vector contribution. The approximation may represent a neuron, direction,
subspace, or distributed circuit without assuming that its underlying
implementation is one-dimensional.

\subsection{Candidate Neural Bases}
\label{appendix:state_reuse_basis}

Knowledge- and skill-neuron studies show that interventions on individual or
sparse neuron sets can affect model behavior
\cite{dai-etal-2022-knowledge,wang-etal-2022-finding-skill}. Under
superposition, however, meaningful features may be distributed across
non-orthogonal directions rather than aligned with individual neurons
\cite{elhage2022toy}. The apparent granularity of a candidate state may
therefore depend on the basis and interface at which the model is analyzed.
A distributed computation may appear as several components internally while
producing a compact contribution at a later residual-stream location.

Sparse autoencoders provide another candidate basis by separating activation
patterns into sparse coefficients and decoder directions
\cite{bricken2023monosemanticity,templeton2024scaling}. Residual-stream
directions, subspaces, and multi-layer circuits provide further possibilities
\cite{olah2020zoom,todd2024functionvectors}. These representations are
candidate implementations only: common activation does not by itself
establish a lifted state.

\subsection{Type-Conditioned Evidence for a Lifted State}
\label{appendix:evidence_for_lifted_state}

Evidence for a lifted state must be conditioned on an independently specified
computation-relative type. Episodes should first be sampled from the proposed
type, relevant subtypes, and appropriate controls without using the candidate
neural component to define the grouping. This separation prevents the type
from being defined retrospectively by whichever activation pattern happens to
be observed.

The candidate component should then be recruited systematically across type
members and should provide a common contribution to the target computation.
For an SAE feature, for example, common activation should be accompanied by a
shared reconstructed contribution through its decoder direction. Its
recruitment and contribution should also distinguish the proposed type from
controls that are similar in surface form but differ in the target
computation.

Finally, ablation, patching, suppression, or steering should alter the same
target computation across the grouped episodes. Coordinated behavioral
revision should correspond to changes in the component's recruitment or
contribution, whereas successful subtype separation should involve reduced
sharing, rerouting, or compensating computation elsewhere.

Thus, neurons, sparse features, directions, subspaces, and circuits provide
possible neural bases. The Lifted State Hypothesis imposes the stronger
requirement that a predefined episode type systematically and causally reuse
the same functional contribution.


\section{Proofs and Derivations for Non-Monotonic Revision}
\label{appendix:theoretical-analysis}

This appendix restates the formal claims from the main analysis in
self-contained form, provides their proofs, and derives the first-order
approximation used to characterize state revision. Throughout this section,
notation associated with the episode representation
\(\textcolor{routingblue}{q}\), activation filter
\(\textcolor{filterorange}{z}\), and state value
\(\textcolor{valuepurple}{\phi}\) is color-coded accordingly. 

\subsection{Subtype Activation through Filter Alignment}
\label{appendix:subtype_activation_transfer}

\paragraph{Proposition~2
(Aligned filter updates increase subtype activation; restated).}
Let \(e'\) be an unseen episode belonging to subtype \(\tau'\). For
\(t\in\{0,1\}\), let its activation under state \(s_k\) be
\begin{equation}
    a_{k,t}(e')
    =
    \sigma\!\left(
        \textcolor{filterorange}{z_{k,t}}^{\top}
        \textcolor{routingblue}{q(e')}
    \right),
\end{equation}
where the episode representation \(q(e')\) is held fixed during the initial
learning stage. The shared filter moves from \(z_{k,0}\) to \(z_{k,1}\), with
update direction
\begin{equation}
    \textcolor{filterorange}{\Delta_1 z_k}
    :=
    \textcolor{filterorange}{
        z_{k,1}-z_{k,0}
    }.
\end{equation}
If the unseen subtype representation is positively aligned with this filter
update,
\begin{equation}
    \textcolor{routingblue}{q(e')}^\top
    \textcolor{filterorange}{\Delta_1 z_k}
    >
    0,
\label{eq:appendix_subtype_alignment}
\end{equation}
then its activation increases:
\begin{equation}
    a_{k,1}(e')
    >
    a_{k,0}(e').
\end{equation}

\begin{proof}
Let
\begin{equation}
    \ell_{k,t}(e')
    :=
    \textcolor{filterorange}{z_{k,t}}^{\top}
    \textcolor{routingblue}{q(e')}
\end{equation}
denote the activation logit of \(s_k\) for \(e'\). Since
\begin{equation}
    \textcolor{filterorange}{z_{k,1}}
    =
    \textcolor{filterorange}{z_{k,0}}
    +
    \textcolor{filterorange}{\Delta_1 z_k},
\end{equation}
and \(q(e')\) is fixed, the updated logit satisfies
\begin{align}
    \ell_{k,1}(e')
    &=
    \left(
        \textcolor{filterorange}{z_{k,0}}
        +
        \textcolor{filterorange}{\Delta_1 z_k}
    \right)^\top
    \textcolor{routingblue}{q(e')}
    \nonumber\\
    &=
    \textcolor{filterorange}{z_{k,0}}^\top
    \textcolor{routingblue}{q(e')}
    \nonumber\\
    &\quad+
    \textcolor{routingblue}{q(e')}^\top
    \textcolor{filterorange}{\Delta_1 z_k}
    \nonumber\\
    &=
    \ell_{k,0}(e')
    +
    \textcolor{routingblue}{q(e')}^\top
    \textcolor{filterorange}{\Delta_1 z_k}.
\end{align}
By the positive-alignment condition in
Equation~\eqref{eq:appendix_subtype_alignment},
\begin{equation}
    \ell_{k,1}(e')
    >
    \ell_{k,0}(e').
\end{equation}
Because \(\sigma\) is strictly increasing,
\begin{align}
    a_{k,1}(e')
    &=
    \sigma\!\left(
        \ell_{k,1}(e')
    \right)
    \nonumber\\
    &>
    \sigma\!\left(
        \ell_{k,0}(e')
    \right)
    \nonumber\\
    &=
    a_{k,0}(e').
\end{align}
\end{proof}

The proposition makes the transfer mechanism explicit. Although \(e'\) is
unseen during the initial learning stage, its activation increases when the
shared-filter update points toward its episode representation. Thus,
activation can transfer to an unseen subtype when its representation is
positively aligned with the learned filter direction. The condition concerns
the direction of the filter update, rather than the absolute orientation of
either \(z_{k,0}\) or \(z_{k,1}\).

The fixed-representation assumption isolates activation transfer induced by
the filter update. If \(q(e')\) also changes, the activation-logit change
additionally includes a representation-change term and an interaction between
the representation and filter changes.

\subsection{Persistent Activation Sharing Bounds Differential Revision}
\label{appendix:persistent_sharing}

\paragraph{Proposition~3
(Persistent sharing bounds differential revision; restated).}
Let \(e\) be a preserved parent episode and \(e'\) a revised subtype episode.
For \(\bar e\in\{e,e'\}\) and \(t\in\{1,2\}\), define
\begin{equation}
\begin{aligned}
    v_{k,t}(\bar e)
    &:=
    a_{k,t}(\bar e)
    \textcolor{valuepurple}{\phi_{k,t}},\\
    \Delta v_k(\bar e)
    &:=
    v_{k,2}(\bar e)-v_{k,1}(\bar e).
\end{aligned}
\label{eq:appendix_state_changes}
\end{equation}
Suppose that their activation discrepancy remains bounded at both stages:
\begin{equation}
\begin{aligned}
    \left|
        a_{k,t}(e)-a_{k,t}(e')
    \right|
    &\leq
    \epsilon_{k,t},\\
    &\hspace{-1.5em}t\in\{1,2\}.
\end{aligned}
\label{eq:appendix_activation_discrepancy}
\end{equation}
Then the difference between their state-specific revisions satisfies
\begin{equation}
\begin{aligned}
    &\left\|
        \Delta v_k(e)-\Delta v_k(e')
    \right\|\\
    &\quad\leq
    \epsilon_{k,2}
    \left\|
        \textcolor{valuepurple}{\phi_{k,1}}
        +
        \textcolor{valuepurple}{\Delta\phi_k}
    \right\|\\
    &\qquad+
    \epsilon_{k,1}
    \left\|
        \textcolor{valuepurple}{\phi_{k,1}}
    \right\|.
\end{aligned}
\label{eq:appendix_differential_revision_bound}
\end{equation}

\begin{proof}
Using
\begin{equation}
    \textcolor{valuepurple}{\phi_{k,2}}
    =
    \textcolor{valuepurple}{\phi_{k,1}}
    +
    \textcolor{valuepurple}{\Delta\phi_k},
\end{equation}
Equation~\eqref{eq:appendix_state_changes} gives
\begin{align}
    &\Delta v_k(e)-\Delta v_k(e')
    \nonumber\\
    &=
    \left(
        a_{k,2}(e)-a_{k,2}(e')
    \right)
    \left(
        \textcolor{valuepurple}{\phi_{k,1}}
        +
        \textcolor{valuepurple}{\Delta\phi_k}
    \right)
    \nonumber\\
    &\quad-
    \left(
        a_{k,1}(e)-a_{k,1}(e')
    \right)
    \textcolor{valuepurple}{\phi_{k,1}}.
\end{align}
Taking norms and applying the triangle inequality yields
\begin{align}
    &\left\|
        \Delta v_k(e)-\Delta v_k(e')
    \right\|
    \nonumber\\
    &\leq
    \left|
        a_{k,2}(e)-a_{k,2}(e')
    \right|
    \left\|
        \textcolor{valuepurple}{\phi_{k,1}}
        +
        \textcolor{valuepurple}{\Delta\phi_k}
    \right\|
    \nonumber\\
    &\quad+
    \left|
        a_{k,1}(e)-a_{k,1}(e')
    \right|
    \left\|
        \textcolor{valuepurple}{\phi_{k,1}}
    \right\|.
\end{align}
Applying the bounds in
Equation~\eqref{eq:appendix_activation_discrepancy} gives
\begin{equation}
\begin{aligned}
    &\left\|
        \Delta v_k(e)-\Delta v_k(e')
    \right\|\\
    &\quad\leq
    \epsilon_{k,2}
    \left\|
        \textcolor{valuepurple}{\phi_{k,1}}
        +
        \textcolor{valuepurple}{\Delta\phi_k}
    \right\|\\
    &\qquad+
    \epsilon_{k,1}
    \left\|
        \textcolor{valuepurple}{\phi_{k,1}}
    \right\|.
\end{aligned}
\end{equation}
\end{proof}

In the exact-sharing case,
\begin{equation}
\begin{aligned}
    a_{k,t}(e)
    &=
    a_{k,t}(e'),\\
    &\hspace{-1.5em}t\in\{1,2\},
\end{aligned}
\end{equation}
we have \(\epsilon_{k,1}=\epsilon_{k,2}=0\), and therefore
\begin{equation}
    \Delta v_k(e)
    =
    \Delta v_k(e').
\end{equation}
Thus, while the two episodes retain the same activation, the shared state
cannot produce different revisions for the parent and subtype. Differential
revision therefore requires activation separation, rerouting, or compensating
computation elsewhere. This conclusion concerns the state-specific
contribution \(v_k\), not the complete representations or outputs.

%% file: appendix/3_nmr_type_dataset.tex
\section{NMR-Type Dataset Construction}
\label{appendix:3_datasets}
The \textsc{NMR-Type Dataset} instantiates overlapping logical types through
divisibility relations over integers. It first teaches a broad rule and then
introduces conflicting supervision for a subtype together with replayed
samples. This construction tests whether a model generalizes through a shared
type-level computation and can localize subsequent revision despite replay of
preserved cases.

\paragraph{Logical types and finite realizations.}
Let \(\mathcal{X}_{\mathrm{pool}}\subset\mathbb{Z}_{>0}\) be a finite
pool of candidate integers, and define
\[
    P_k(x)\coloneqq\mathbb{I}[k\mid x].
\]
The corresponding logical type is
\begin{equation}
    X_{\tau_k}
    =
    \left\{
        x\in\mathcal{X}_{\mathrm{pool}}
        :
        P_k(x)=1
    \right\}.
    \label{eq:nmr-logical-type}
\end{equation}
These logical types may overlap or stand in an inclusion relation. In
particular,
\begin{equation}
    P_4(x)\Rightarrow P_2(x),
    \qquad
    X_{\tau_4}\subseteq X_{\tau_2}.
    \label{eq:nmr-logical-subtype}
\end{equation}

The model, however, is not given these type formulas or their complete
denotations. It observes only finite labeled realizations. Consequently, a
finite set of positive even integers may be consistent both with a narrow
type excluding multiples of \(4\) and with the broader type \(P_2\).
Generalization to unobserved multiples of \(4\) therefore indicates that the
model has induced the broader even-number type from the finite realization.

The benchmark itself is generated from mutually disjoint annotated groups.
This partition is part of the underlying data construction but is not
directly provided to the model. The second stage does not create this
partition; rather, it provides evidence revealing a distinction that was not
identifiable from the first-stage observations alone.

\paragraph{Disjoint group construction.}
The benchmark retains the following four groups:
\begin{equation}
\begin{aligned}
    \mathcal{G}_2
    &= \{x\in\mathcal{X}_{\mathrm{pool}}
        : P_2(x)\land\neg P_3(x)\land\neg P_4(x)\land\neg P_5(x)\},\\
    \mathcal{G}_3
    &= \{x\in\mathcal{X}_{\mathrm{pool}}
        : P_3(x)\land\neg P_2(x)\land\neg P_5(x)\},\\
    \mathcal{G}_4
    &= \{x\in\mathcal{X}_{\mathrm{pool}}
        : P_4(x)\land\neg P_5(x)\},\\
    \mathcal{G}_5
    &= \{x\in\mathcal{X}_{\mathrm{pool}}
        : P_5(x)\land\neg P_2(x)\land\neg P_3(x)\}.
\end{aligned}
\label{eq:nmr-disjoint-groups}
\end{equation}
The retained benchmark support is
\begin{equation}
    \mathcal{X}^{\circ}
    =
    \mathcal{G}_2
    \mathbin{\dot{\cup}}
    \mathcal{G}_3
    \mathbin{\dot{\cup}}
    \mathcal{G}_4
    \mathbin{\dot{\cup}}
    \mathcal{G}_5.
    \label{eq:nmr-retained-support}
\end{equation}
Thus, the annotated groups are pairwise disjoint even though their primitive
logical predicates need not be. For example, \(P_3\) and \(P_4\) overlap
logically, and an integer divisible by both \(3\) and \(4\), but not by
\(5\), is assigned to \(\mathcal{G}_4\).

Restricted to integers from \(1\) to \(30\), the groups are
\begin{equation}
\begin{aligned}
    \mathcal{G}_2\cap\{1,\ldots,30\}
    &= \{2,14,22,26\},\\
    \mathcal{G}_3\cap\{1,\ldots,30\}
    &= \{3,9,21,27\},\\
    \mathcal{G}_4\cap\{1,\ldots,30\}
    &= \{4,8,12,16,24,28\},\\
    \mathcal{G}_5\cap\{1,\ldots,30\}
    &= \{5,25\}.
\end{aligned}
\label{eq:nmr-group-examples}
\end{equation}
For instance, \(12\) and \(24\) belong to \(\mathcal{G}_4\) despite also
being divisible by \(3\), whereas \(20\) is excluded because it is divisible
by both \(4\) and \(5\).

Within the retained support, the broad even-number type is realized as
\begin{equation}
    X_{\tau_2}\cap\mathcal{X}^{\circ}
    =
    \mathcal{G}_2
    \mathbin{\dot{\cup}}
    \mathcal{G}_4.
    \label{eq:nmr-even-partition}
\end{equation}
Accordingly, \(\mathcal{G}_4\) is the revised subtype of the even-number
rule, whereas \(\mathcal{G}_2\) is the retained non-\(P_4\) region.
Integers outside \(\mathcal{X}^{\circ}\), including unused or ambiguous
divisor combinations, are not included in the benchmark.

\paragraph{Stage \(W_1\).}
For each group, let \(S_{t,k}\subseteq\mathcal{G}_k\) denote its finite
support at stage \(W_t\). The first-stage dataset is
\begin{equation}
\begin{aligned}
    \mathcal{D}_{W_1}
    ={}&
    \left\{
        (x,\texttt{positive})
        :
        x\in S_{1,2}
    \right\}\\
    &\cup
    \left\{
        (x,\texttt{negative})
        :
        x\in S_{1,3}
    \right\}\\
    &\cup
    \left\{
        (x,\texttt{positive})
        :
        x\in S_{1,5}
    \right\},
\end{aligned}
\label{eq:nmr-w1-dataset}
\end{equation}
with no examples from \(\mathcal{G}_4\). Because
\[
    S_{1,2}
    \subseteq
    \mathcal{G}_2
    \subseteq
    X_{\tau_2},
\]
the observed positive even examples are consistent both with a rule
restricted to \(\mathcal{G}_2\) and with the broader rule
\[
    P_2(x)\mapsto\texttt{positive}.
\]
A positive prediction on \(\mathcal{G}_4\) therefore indicates that the
model has generalized the broader \(P_2\) rule beyond its observed finite
support.

\paragraph{Stage \(W_2\).}
Let \(R_k\subseteq S_{1,k}\) denote the examples replayed from
\(\mathcal{G}_k\), for \(k\in\{2,3,5\}\). The second-stage dataset is
\begin{equation}
\begin{aligned}
    \mathcal{D}_{W_2}
    ={}&
    \left\{
        (x,\texttt{negative})
        :
        x\in S_{2,4}
    \right\}\\
    &\cup
    \left\{
        (x,\texttt{positive})
        :
        x\in R_2
    \right\}\\
    &\cup
    \left\{
        (x,\texttt{negative})
        :
        x\in R_3
    \right\}\\
    &\cup
    \left\{
        (x,\texttt{positive})
        :
        x\in R_5
    \right\}.
\end{aligned}
\label{eq:nmr-w2-dataset}
\end{equation}
The negative \(\mathcal{G}_4\) examples reveal that the previously plausible
broad even-number rule must be refined. The replayed
\(\mathcal{G}_2\) examples specify where this revision must stop, while the
\(\mathcal{G}_3\) and \(\mathcal{G}_5\) examples provide retention signals
for the other annotated groups. Thus, \(W_2\) makes the previously latent
partition
\(\mathcal{G}_2\mathbin{\dot{\cup}}\mathcal{G}_4\) empirically
distinguishable.

\paragraph{Evaluation target.}
The required predictions after the two stages are
\begin{equation}
\begin{aligned}
    W_1:\quad&
    \mathcal{G}_4
    \mapsto
    \texttt{positive},
    \\
    W_2:\quad&
    \mathcal{G}_4
    \mapsto
    \texttt{negative},
    \qquad
    \mathcal{G}_2
    \mapsto
    \texttt{positive},
    \\
    &
    \mathcal{G}_3
    \mapsto
    \texttt{negative},
    \qquad
    \mathcal{G}_5
    \mapsto
    \texttt{positive}.
\end{aligned}
\label{eq:nmr-evaluation-targets}
\end{equation}
Therefore, \(\mathcal{G}_4\) measures initial type-level generalization
followed by subtype revision, whereas \(\mathcal{G}_2\) measures
preservation within the same initially plausible parent type.
The \(\mathcal{G}_3\) and \(\mathcal{G}_5\) groups measure retention outside
the even-number partition. The model observes only individual inputs and
labels; the logical formulas, group identities, and disjoint partition
remain latent benchmark annotations.

The modulo construction should therefore be viewed not as a proposal for
modulo reasoning itself, but as a controlled method for constructing
type-based flips. Such non-monotonic flips arise when a type-level fact
induced from incomplete evidence must later be selectively overturned as
additional observations reveal a finer type boundary. The task thus tests
revision of an initially plausible generalization rather than arbitrary
reversal of an isolated label.

\clearpage 

\section{Experimental Settings}
\label{appendix:experimental_settings}

\paragraph{Prompt format for finetuning.}
For finetuning, each example is formatted as a single integer-to-label query.
The labels are class names, positive and negative. The model is trained to generate only the assistant answer, while the
prompt tokens are masked out from the loss.
\begin{tcolorbox}[
    title={Finetuning prompt},
    breakable,
    colback=gray!1,
    colframe=black!80,
    boxrule=0.5pt,
    arc=1mm,
    left=1mm,
    right=1mm,
    top=1mm,
    bottom=1mm
]
\footnotesize
\ttfamily
\raggedright

Instruction:\par
Map the integer to positive, negative, or neutral
using the assigned modulo type rule.
These are arbitrary class labels, not the
mathematical sign of the integer.\par
\medskip

Respond with exactly one label wrapped in\par
<answer> and </answer> tags.\par
\medskip

Integer: \{x\}\par
\medskip

Question:\par
According to the assigned type rule,\par
what is this integer's label?\par
\medskip

Answer:\par
\medskip

Assistant completion:\par
<answer>\{positive|negative\}</answer>
\end{tcolorbox}
When a model tokenizer provides a chat template, we apply the model-specific
chat template to the user prompt and assistant answer. Otherwise, we use the
plain-text prompt above. During evaluation, decoding is deterministic
(\texttt{do\_sample=False}) with at most 16 new tokens.

\paragraph{Training schedules.}
We compare \(W_1+W_2\) and \(W_2\)-only schedules, each comprising 300
gradient updates. The former applies 150 updates to each stage, whereas the
latter applies all 300 updates to \(W_2\). Each stage contains 32 examples.
For \(W_1\), we sample
\[
    (T_2,T_3,T_5)=(8,16,8),
\]
and for \(W_2\),
\[
    (T_2,T_3,T_4,T_5)=(8,8,8,8).
\]
Both sets contain 16 \texttt{positive} and 16 \texttt{negative} examples.
For each type, examples are drawn by a seed-specific sampler from its first
64 training candidates. In \(W_2\), \(T_4\) is the revised subtype, while
\(T_2\), \(T_3\), and \(T_5\) serve as replay-based preservation targets.
\paragraph{Optimization.}
Full-parameter finetuning trains all model parameters. LoRA finetuning freezes
the base model and trains LoRA adapters only. We use AdamW unless otherwise
specified. The full-parameter learning rate is \(1\times10^{-5}\), and the
LoRA learning rate is \(5\times10^{-5}\). The learning rate is linearly decayed
over the full schedule to \(0.7\) times its initial value. We use weight decay
0.0, gradient clipping with maximum norm 1.0, and no warmup. Gradient
checkpointing is enabled when supported by the model.

The effective batch sizes are stage-specific. For \(W_1\), the batch size is 3,
matching the three \(W_1\) types. For \(W_2\), the batch size is 4, matching the
four \(W_2\) types. Gradient accumulation is 1 in both stages. We evaluate every
30 gradient updates and at the end of each stage.

\paragraph{LoRA configuration.}
For LoRA experiments, we use rank \(r=16\), scaling parameter
\(\alpha=32\), dropout 0.0, and no bias adaptation. Adapters are applied to
the MLP projection modules \(\texttt{gate\_proj}\),
\(\texttt{up\_proj}\), and \(\texttt{down\_proj}\) under the causal language
modeling task configuration.

\paragraph{Evaluation.}
All evaluations use the held-out test split of the v6 pure-5 dataset, which
contains 214 examples: 53 from \(T_2\), 54 from \(T_3\), 80 from \(T_4\),
and 27 from \(T_5\). Accuracy is computed from normalized generated labels.
An output is counted as correct when it contains the target label, preferably
within the required \texttt{<answer>} tag; outputs without a valid label are
counted as invalid.

\paragraph{In-context learning prompt.}
For ICL, the model receives labeled examples as \((x,y)\) pairs followed by a
query integer. No parameters are updated.

\begin{tcolorbox}[
    title={ICL prompt},
    breakable,
    colback=gray!1,
    colframe=black!80,
    boxrule=0.5pt,
    arc=1mm,
    left=1mm,
    right=1mm,
    top=1mm,
    bottom=1mm
]
\footnotesize
\ttfamily
\raggedright

System:\par
You are given examples of an arbitrary
integer-to-label rule.
The labels are arbitrary class names, not the
mathematical sign of the integer.
\medskip

Infer the rule from the examples and answer
the query with exactly one tag:\par
\medskip

<answer>positive</answer>\par
or\par
<answer>negative</answer>.\par
\medskip

User:\par
Examples are shown as (x, y) pairs, where x is\par
an integer and y is its label.\par
\medskip

Use only the pattern in these pairs to label\par
the query integer.\par
\medskip

Examples:\par
(x\_1, y\_1)\par
...\par
(x\_m, y\_m)\par
\medskip

Assistant:\par
Understood.\par
\medskip

User:\par
Query:\par
x = \{x\_query\}\par
\medskip

Answer with exactly one tag:\par
<answer>positive</answer>\par
or\par
<answer>negative</answer>.
\end{tcolorbox}

The default \(W_1\) ICL context contains eight balanced examples: two from
\(T_2\), four from \(T_3\), and two from \(T_5\), yielding four
\texttt{positive} and four \texttt{negative} examples. We vary the number of
\(T_4\) context examples from 0 to 8 and evaluate both ordered and shuffled
context sequences. ICL uses deterministic decoding with batch size 4, a
maximum input length of 2048 tokens, and at most 16 generated tokens.

%% file: appendix/4_experimental_results.tex
\begin{figure*}[t!]
    \centering
    \includegraphics[width=1.0\linewidth]{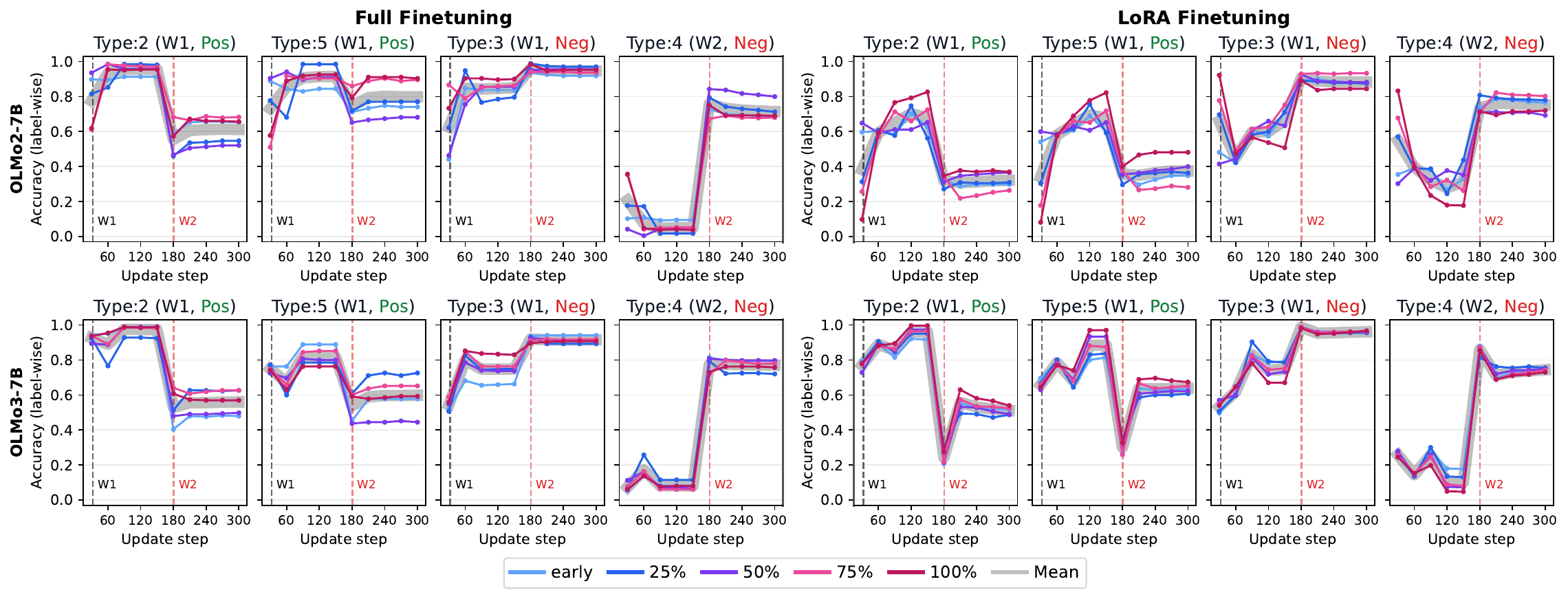}
    \caption{Type-wise accuracy trajectories across OLMo checkpoints under full and LoRA finetuning. The red dashed line marks the transition from \(W_1\) to \(W_2\), and the gray line shows the checkpoint mean.}
    \label{fig:appendix_checkpoint_tuning}
\end{figure*}

\section{Additional Experimental Results}
\label{appendix:experimental_results}
\paragraph{Overview.}
The main experiments establish the non-monotonic flip pattern at the
behavioral level. The appendix further examines properties predicted by the
Lifted State Hypothesis through four complementary analyses.

\begin{enumerate}
    \item \textbf{Model scale and checkpoints.}
    Does the generalization--revision pattern persist across model sizes?

   \item \textbf{Sequential revision versus joint learning.} Is it more effective to first learn the broad \(W_1\) rule and revise it under \(W_2\), or to learn all \(W_2\) cases jointly from the start? We compare sequential \(W_1 \rightarrow W_2\) learning with \(W_2\)-only learning using the same number of \(W_2\) updates.

    \item \textbf{Specificity under overlapping and non-overlapping controls.}
    Does the revision pattern involving \(G_5\) persist when \(G_5\) is
    constructed with or without numerical overlap with the other groups?

    \item \textbf{Activation and causal analysis.}
    Does \(W_1\) form shared MLP activation structure between \(G_2\) and
    \(G_4\), and do changes to these shared components explain the subsequent
    \(G_2\) decline through routing or output-value revision?
\end{enumerate}

\subsection{Model Scale and Checkpoints}

We examine whether the generalization--revision pattern persists across model
sizes and pretraining checkpoints. We compare Llama, Gemma-3, and Qwen3 at
multiple scales and evaluate five checkpoints of OLMo-2-7B and OLMo-3-7B
under full and LoRA finetuning. Figure~\ref{fig:scale_checkpoint_summary}
summarizes final accuracy averaged over all types, while
Figure~\ref{fig:appendix_checkpoint_tuning} presents the corresponding
accuracy across OLMo checkpoints.

As shown in Figure~\ref{fig:appendix_checkpoint_tuning}, the qualitative
pattern persists across checkpoints and update methods. During \(W_1\), the
models learn the shared positive behavior of \(\mathcal{G}_2\) and
\(\mathcal{G}_5\), while \(\mathcal{G}_4\) remains negative. After the
transition to \(W_2\), \(\mathcal{G}_4\) accuracy rises, whereas
\(\mathcal{G}_2\), and often \(\mathcal{G}_5\), declines. In contrast,
the unrelated \(\mathcal{G}_3\) is generally preserved.

\begin{figure}[t!]
    \centering
    \includegraphics[width=1.0\linewidth]{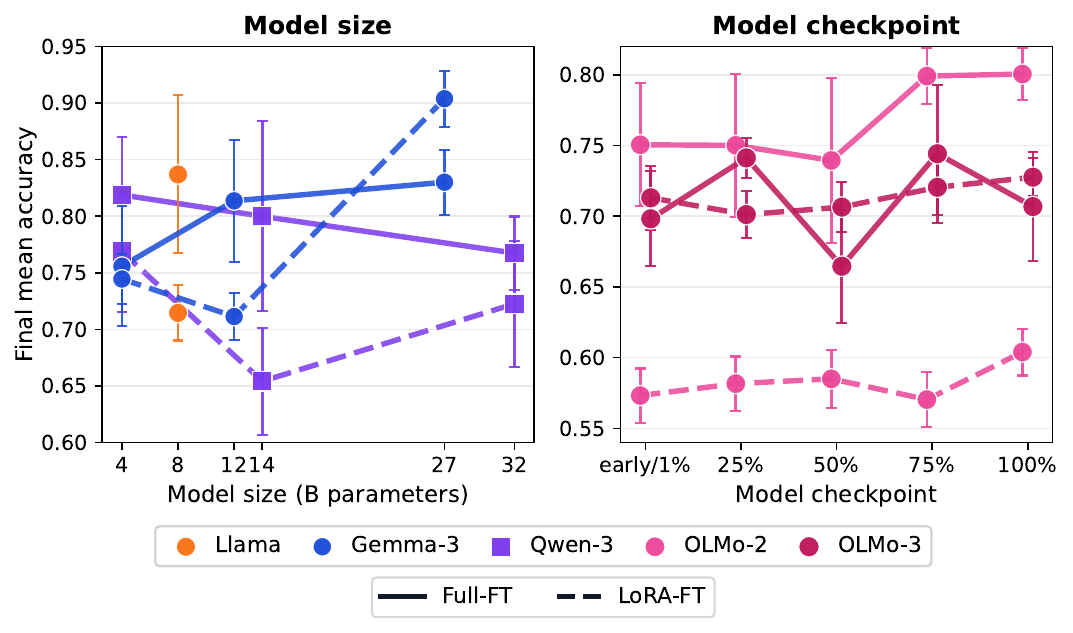}
    \caption{Final mean accuracy over all types across model scales and OLMo checkpoints. }
\label{fig:scale_checkpoint_summary}
    \label{fig:appendix_fig_size_and_checkpoints}
\end{figure}
Figure~\ref{fig:scale_checkpoint_summary} shows that final performance is not
monotonic in either model scale or checkpoint. Larger models improve in some
families and update settings, but no consistent scale advantage appears across
all configurations. Similarly, later OLMo checkpoints do not uniformly
outperform earlier ones, and the relative behavior of full and LoRA
finetuning varies across the two OLMo families. This variability suggests that
capacity alone does not determine how effectively a model separates revised
and preserved types. Instead, revision performance appears to depend on the
interaction between the pretrained representation and the update mechanism.

These results suggest that stronger models may form finer distinctions among
computation-relative types and remain more stable when subtype-specific
evidence is introduced. Greater scale or pretraining progress can therefore
reduce unnecessary coupling between the revised subtype and preserved parent
cases, although this advantage is not uniform across all models and update
methods.

\begin{table*}[t]
\centering

\textbf{Full Finetuning}\par
\vspace{2pt}

\resizebox{\textwidth}{!}{%
\begin{tabular}{lccccccccccccccccc}
\toprule
 & \multicolumn{1}{c}{Llama} & \multicolumn{3}{c}{Gemma} & \multicolumn{3}{c}{Qwen} & \multicolumn{5}{c}{OLMo 2} & \multicolumn{5}{c}{OLMo 3} \\
\cmidrule(lr){2-2} \cmidrule(lr){3-5} \cmidrule(lr){6-8} \cmidrule(lr){9-13} \cmidrule(lr){14-18}
Schedule & 8B & 4B & 12B & 27B & 4B & 14B & 32B & early & 25\% & 50\% & 75\% & 100\% & early & 25\% & 50\% & 75\% & 100\% \\
\midrule
W1+W2 & \textbf{0.837} & \textbf{0.756} & \textbf{0.814} & \textbf{0.830} & \textbf{0.819} & \textbf{0.800} & \textbf{0.767} & \textbf{0.751} & \textbf{0.750} & \textbf{0.739} & \textbf{0.799} & \textbf{0.801} & {0.698} & \textbf{0.741} & {0.665} & \textbf{0.744} & 0.707 \\
W2 & 0.808 & 0.697 & 0.792 & 0.797 & 0.737 & 0.690 & 0.729 & 0.595 & 0.605 & 0.610 & 0.600 & 0.594 & \textbf{0.718} & 0.707 & \textbf{0.735} & 0.733 & \textbf{0.733} \\
\midrule
\end{tabular}%
}
\textbf{LoRA Finetuning}\par
\vspace{1pt}
\resizebox{\textwidth}{!}{%
\begin{tabular}{lccccccccccccccccc}
\midrule
 & \multicolumn{1}{c}{Llama} & \multicolumn{3}{c}{Gemma} & \multicolumn{3}{c}{Qwen} & \multicolumn{5}{c}{OLMo 2} & \multicolumn{5}{c}{OLMo 3} \\
\cmidrule(lr){2-2} \cmidrule(lr){3-5} \cmidrule(lr){6-8} \cmidrule(lr){9-13} \cmidrule(lr){14-18}
Schedule & 8B & 4B & 12B & 27B & 4B & 14B & 32B & early & 25\% & 50\% & 75\% & 100\% & early & 25\% & 50\% & 75\% & 100\% \\
\midrule
W1+W2 & \textbf{0.715} & \textbf{0.745} & \textbf{0.711} & \textbf{0.904} & \textbf{0.769} & \textbf{0.654} & 0.723 & 0.573 & 0.582 & 0.585 & 0.570 & 0.604 & 0.713 & 0.701 & 0.706 & 0.720 & 0.728 \\
W2 & 0.650 & 0.678 & 0.668 & 0.858 & 0.718 & 0.624 & \textbf{0.771} & \textbf{0.612} & \textbf{0.602} & \textbf{0.620} & \textbf{0.630} & \textbf{0.623} & \textbf{0.721} & \textbf{0.717} & \textbf{0.731} & \textbf{0.727} & \textbf{0.730} \\
\bottomrule
\end{tabular}%
}
\caption{Final mean accuracy under sequential revision and joint learning,
averaged over types and seeds. Sequential revision is generally better for
Llama, Gemma, Qwen, and full-finetuned OLMo, whereas joint learning is better
for LoRA-finetuned OLMo.}
\label{tab:path_dependent_final_accuracy}
\end{table*}

\begin{figure*}[t!]
    \centering
    \includegraphics[width=1.0\linewidth]{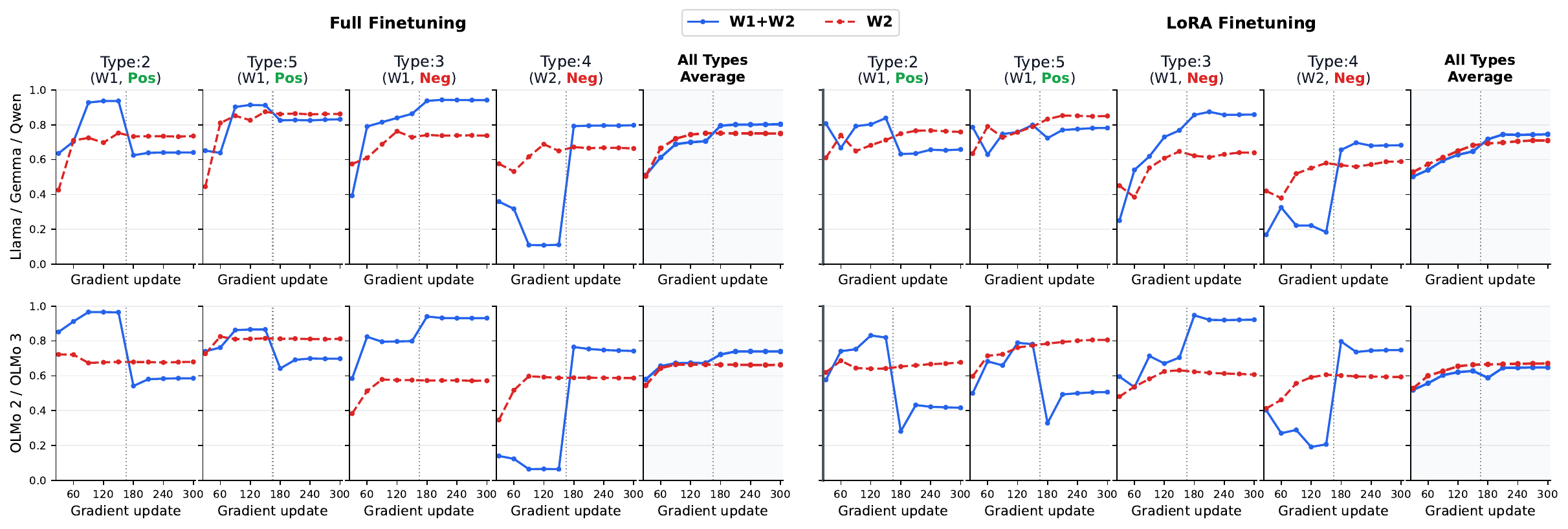}
\caption{Type-wise accuracy trajectories under sequential
\(W_1 \rightarrow W_2\) and \(W_2\)-only training for full and LoRA
finetuning. The dotted line marks the onset of \(W_2\) updates in the
sequential schedule, highlighting the subsequent flip and recovery dynamics.}
\label{fig:path_dependent_trajectories}
\end{figure*}

\subsection{Sequential Revision versus Joint Learning}

We compare two ways of learning the final \(W_2\) rule. In the joint-learning
condition, the model is trained directly on
\((T_2,T_3,T_4,T_5)\). In the sequential-revision condition, it first learns
\(W_1\) from \((T_2,T_3,T_5)\) and is then revised using \(W_2\). Both
conditions receive the same number of \(W_2\) updates. This comparison tests
whether first learning a broad rule helps or hinders the subsequent
acquisition of the subtype distinction required by \(W_2\).

Table~\ref{tab:path_dependent_final_accuracy} shows that, under full
finetuning, sequential revision achieves higher final mean accuracy in most
settings. This tendency appears across Llama, Gemma, Qwen, and the OLMo
checkpoints, although its magnitude varies by model. Under LoRA finetuning,
sequential revision is also generally stronger for Llama, Gemma, and most
Qwen models, with Qwen-32B as the main exception. The OLMo results depend
more strongly on the update method: sequential revision is usually better
under full finetuning, whereas joint learning performs better across all
checkpoints under LoRA.

Figure~\ref{fig:path_dependent_trajectories} clarifies how these final
differences emerge. Under joint learning, accuracy typically rises to a
moderate level and then stabilizes, suggesting early convergence with limited
subsequent reorganization. Sequential revision instead exhibits the
characteristic flip pattern: after the transition to \(W_2\), performance on
some previously generalized types declines, followed by partial or substantial
recovery as the model reorganizes its solution.

The flip should therefore not be interpreted solely as a learning failure.
In many settings, first acquiring the broad \(W_1\) rule and then revising it
produces better final performance than learning all \(W_2\) distinctions
jointly from the start. The initial phase may provide a computational scaffold
that captures the dominant regularity, while the later decline reflects the
cost of separating the revised subtype from cases that previously shared the
same computation. 

Sequential revision can thus be beneficial rather than purely interfering.
Learning a broad regularity first and refining it after subtype-specific
evidence arrives may be easier than discovering the broad rule and its
exception simultaneously. However, the OLMo LoRA results show that this
advantage is not universal and depends on both the model and the update
mechanism. Sequential revision may therefore provide a productive learning
curriculum, but its benefit depends on whether the model can effectively
reorganize the computation established during \(W_1\).

\begin{figure*}[t!]
    \centering
    \includegraphics[width=0.65\linewidth]{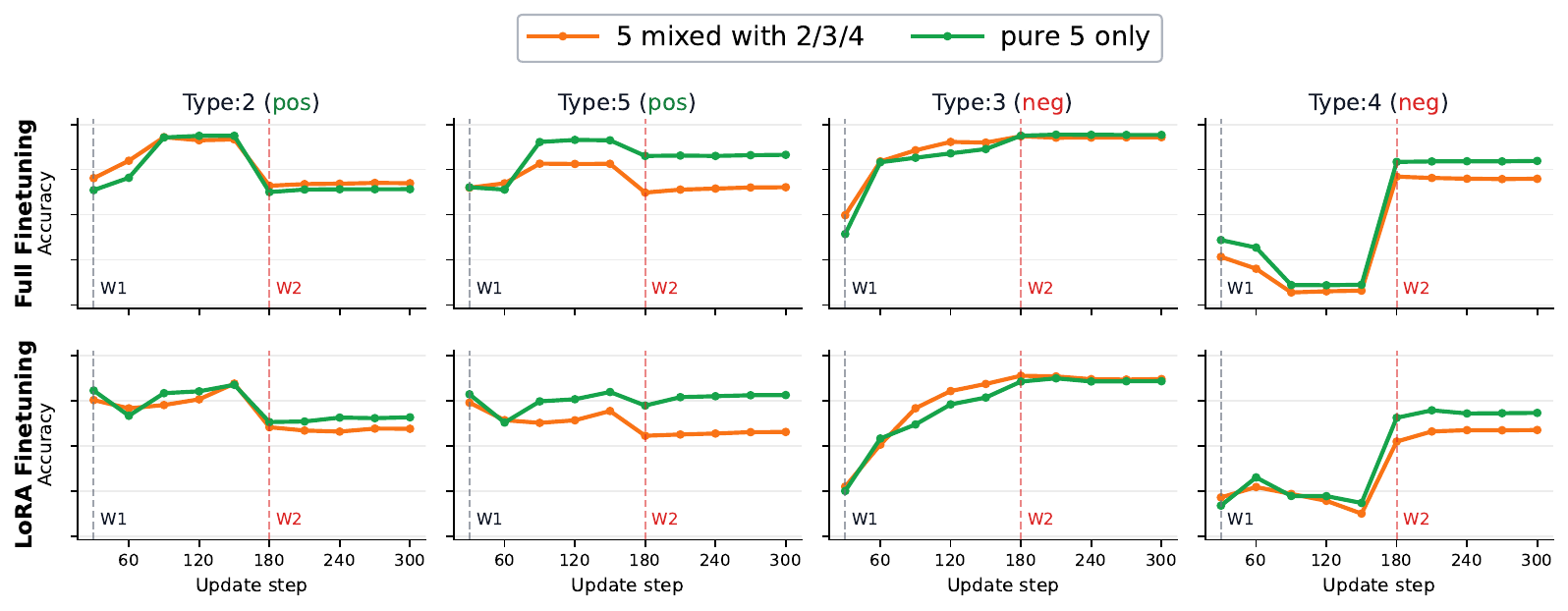}
    \caption{Accuracy trajectories averaged over Types 2, 3, 4, and 5 for mixed and pure \(\mathcal{G}_5\) under full and LoRA finetuning.}
    \label{fig:appendix_fig_data_mix}
\end{figure*}

\begin{figure*}[t!]
    \centering
    \includegraphics[width=1.0\linewidth, trim={0 0 0 12mm}, clip]{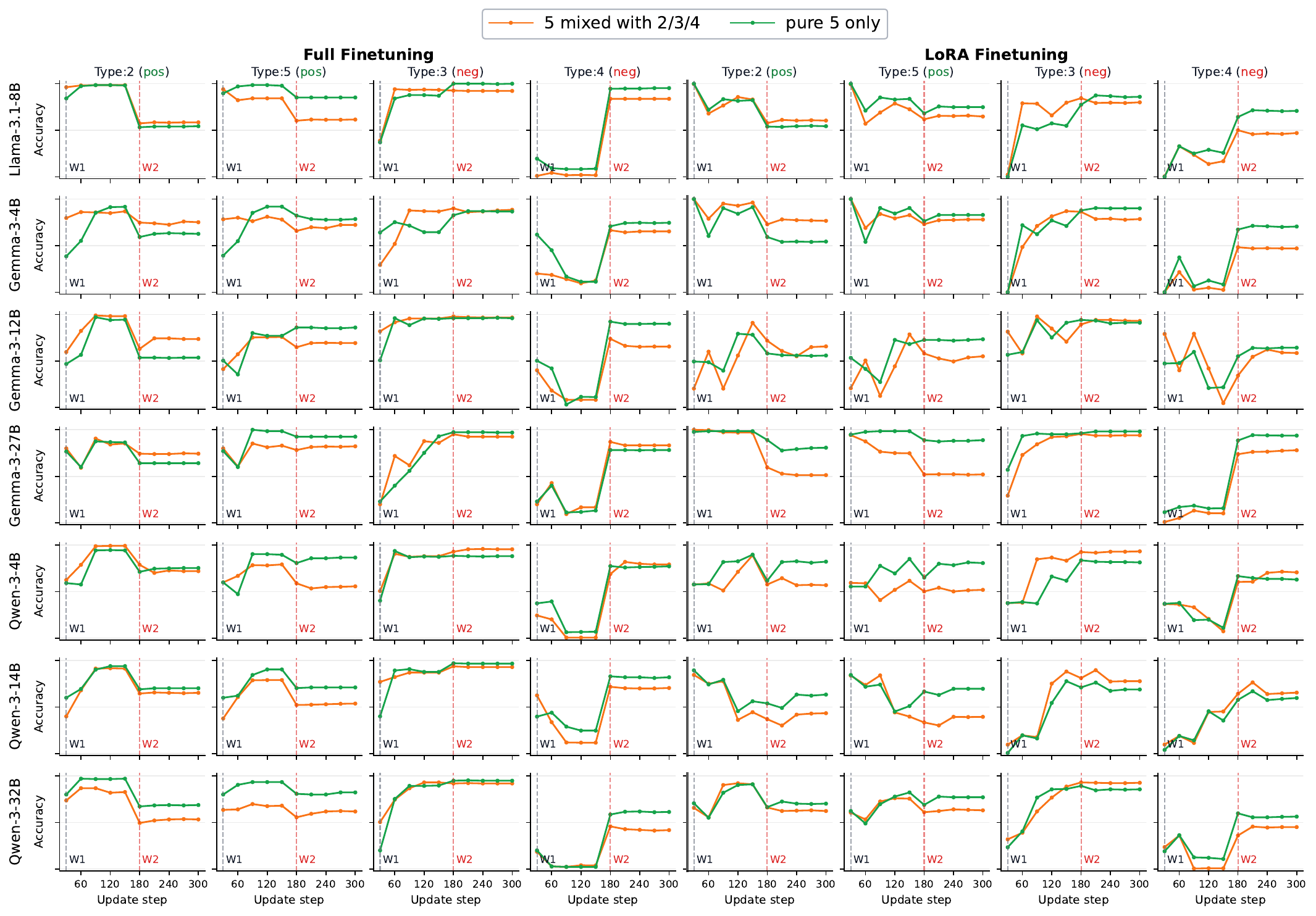}
    \caption{Model-wise accuracy trajectories comparing mixed and pure \(\mathcal{G}_5\) constructions across full and LoRA finetuning.}
    \label{fig:appendix_fig_data_mix_full}
\end{figure*}
\subsection{Effects of Subtype Overlap}

\paragraph{Instance overlap under parameter tuning.}
We examine whether the decline of \(\mathcal{G}_5\) during \(W_2\) revision
depends on its numerical overlap with the other types. We compare two
constructions:
\begin{itemize}
    \item \textbf{Mixed \(\mathcal{G}_5\):} \(\mathcal{G}_5^{\mathrm{mix}}=\{x\in\mathcal{X}\mid x\equiv 0\pmod{5}\}\).
    \item \textbf{Pure \(\mathcal{G}_5\):} \(\mathcal{G}_5^{\mathrm{pure}}=\mathcal{G}_5^{\mathrm{mix}}\setminus(\mathcal{G}_2\cup\mathcal{G}_3\cup\mathcal{G}_4)\).
\end{itemize}
This comparison tests whether shared instances make \(\mathcal{G}_5\) more
susceptible to collateral revision.

Figures~\ref{fig:appendix_fig_data_mix} and
\ref{fig:appendix_fig_data_mix_full} show that the mixed condition generally
suffers a larger decline in \(\mathcal{G}_5\) accuracy after the transition to
\(W_2\) than the pure condition. This pattern appears under both full and LoRA
finetuning. In most configurations, accuracy drops immediately after \(W_2\)
and then partially recovers, consistent with the model gradually separating
the preserved \(\mathcal{G}_5\) cases from the revised subtype. Gemma-3-12B
under full finetuning is a notable exception, as its \(\mathcal{G}_5\)
accuracy improves after \(W_2\). Because the pure condition also shows some
decline, numerical overlap strengthens the flip but is not its sole cause.

\begin{figure*}[ht!]
    \centering
    \includegraphics[width=0.8\linewidth]{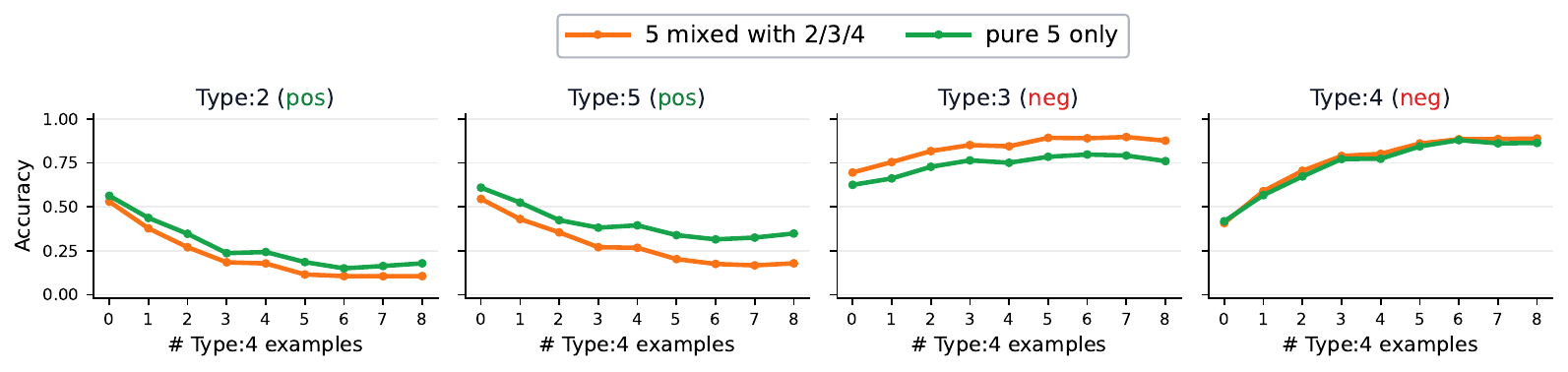}
    \caption{Mean accuracy over all models by Type 4 demonstrations for mixed and pure \(\mathcal{G}_5\) under ICL.}
    \label{fig:appendix_fig_data_mix_icl}
\end{figure*}

\begin{figure*}[ht!]
    \centering
    \includegraphics[width=1.0\linewidth, trim={0 0 0 12mm}, clip ]{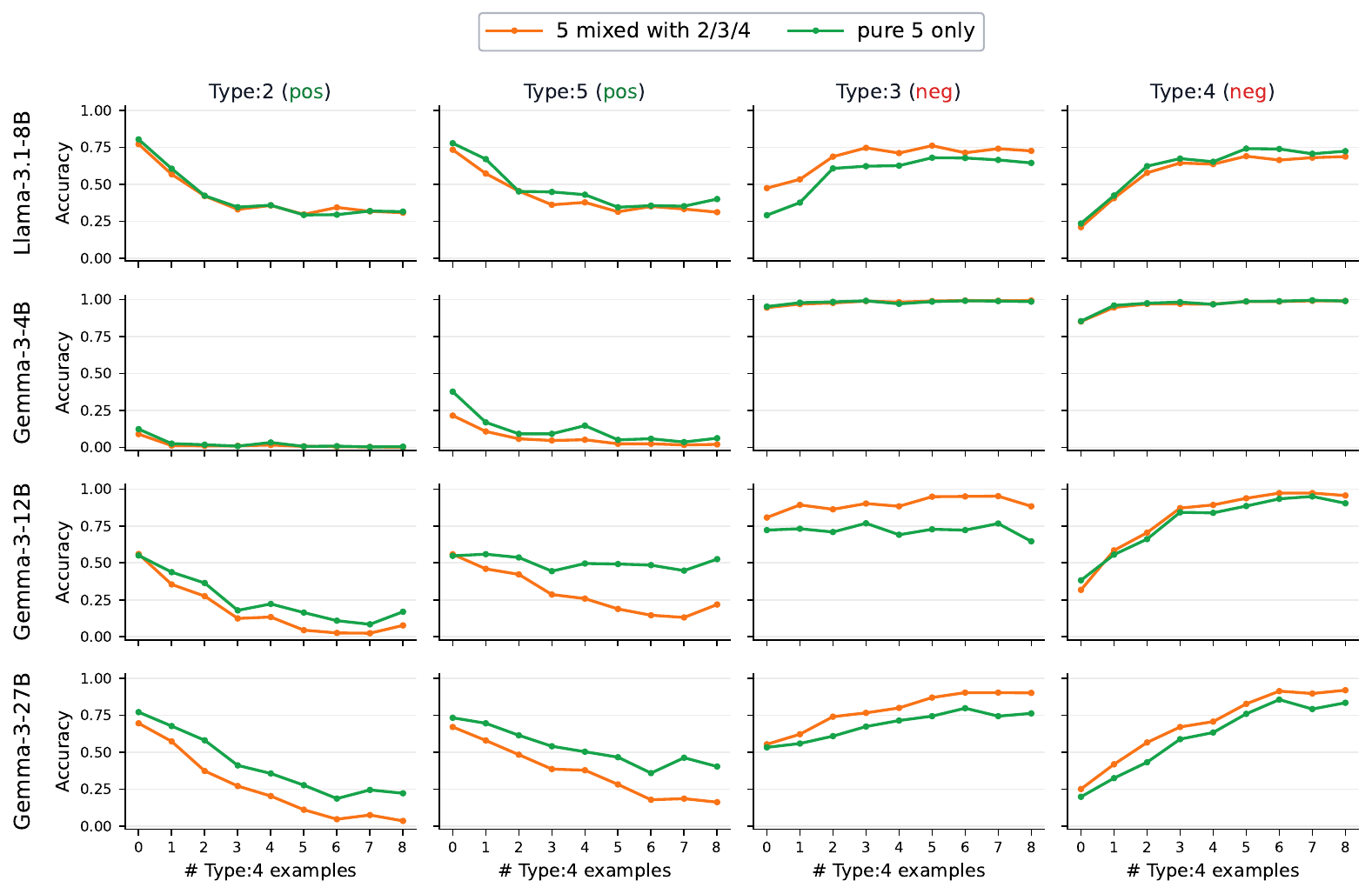}
    \caption{Model-wise accuracy by Type 4 demonstrations for mixed and pure \(\mathcal{G}_5\) under ICL.}
    \label{fig:appendix_fig_data_mix_icl_full}
\end{figure*}

\paragraph{Implications of instance overlap in in-context learning.}
Figures~\ref{fig:appendix_fig_data_mix_icl}
and~\ref{fig:appendix_fig_data_mix_icl_full} compare mixed Type~5 examples,
which also satisfy one of the Type~2--4 conditions, with pure Type~5 examples,
which satisfy none of them. Using pure Type~5 examples reduces the decline in
both Type~2 and Type~5 accuracy as Type~4 demonstrations are added. Literal
instance overlap therefore appears to contribute to the collateral effect of
contextual revision and to the coupling between these types.  However, pure Type~5 examples also produce lower Type~3 accuracy, despite
being disjoint from both Type~2 and Type~3. Removing overlap therefore does
not isolate a single grouping mechanism. It may also change which observable
properties, such as parity or other numerical regularities, become salient in
the context.

\paragraph{Limits of identifying the induced lifting.}
This pattern illustrates the difficulty of determining which cases the model
lifts into a shared computational type. The improved separation between
Type~2 and Type~5 is compatible with weaker overlap-based coupling, but the
simultaneous change in Type~3 suggests that other latent groupings may also be
involved. Parity is one possible explanation, because pure Type~5 and Type~3
examples are odd while Type~2 examples are even, but the results do not
uniquely establish it. More generally, several properties may induce similar changes in accuracy.  
Behavioral comparisons can therefore reveal shared computation without uniquely
identifying the rule that defines the group. Under the Lifted State
Hypothesis, detecting lifting may thus be easier than recovering the precise
latent type used by the model.


\begin{figure*}[t!]
    \centering
    \includegraphics[width=1.0\linewidth]{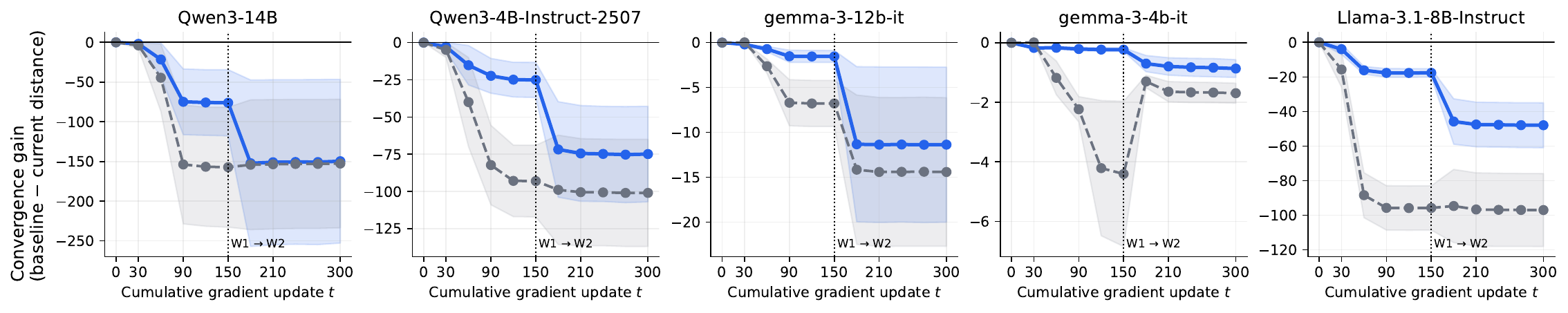}
    \caption{The blue curve shows the baseline-relative convergence gain of
    Type~2--Type~4, while the gray dashed curve shows the mean gain across
    the four cross-control pairs formed by pairing Type~2 or Type~4 with
    Type~3 or Type~5. During \(W_1\), both gains are negative, but the blue
    curve remains above the gray curve, indicating that Type~2--Type~4
    separates less than the controls despite the absence of absolute
    convergence. Lines and bands denote the five-seed mean and standard error.}
    \label{fig:appendix_fig_activation_1_change}
    \label{fig:appendix_fig_activation_1_change}
\end{figure*}

\begin{figure*}[t!]
    \centering
    \includegraphics[width=1.0\linewidth]{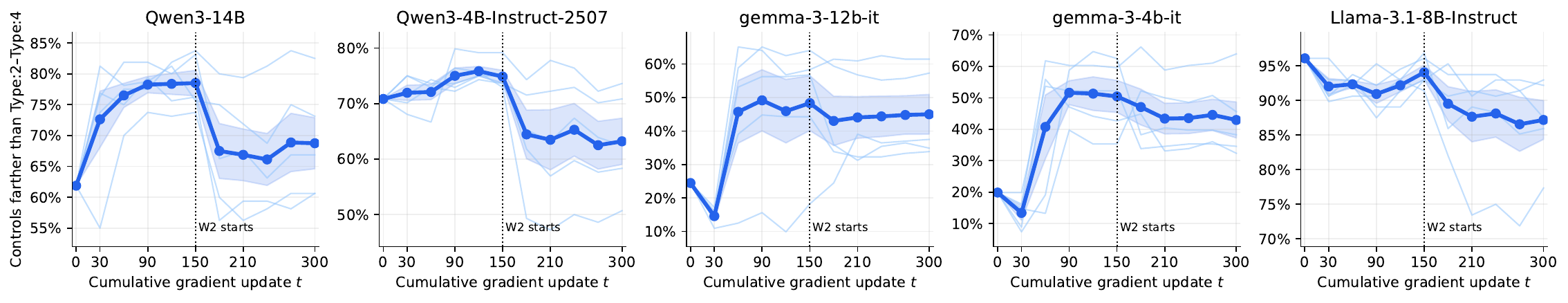}
    \caption{The y-axis reports the proportion of the four cross-control
    distances \((2,3)\), \((2,5)\), \((4,3)\), and \((4,5)\) that exceed
    the Type~2--Type~4 distance. Thin curves show individual seeds, while
    the thick curve and band denote the five-seed mean and standard error
    of the mean. Relative Type~2--Type~4 closeness generally increases
    during \(W_1\) and decreases after \(W_2\) begins.}
    \label{fig:appendix_fig_activation_2_relative}
\end{figure*}

\subsection{Activation analysis}
\label{appendix:activation_causal_analysis}

We examine how similarity among type-specific MLP activation patterns develops
during full fine-tuning and how this organization changes when one type must
be revised while a previously learned relation is retained. We refer to
Type~2 as the \emph{retained type}, because its \(W_1\) behavior should
remain unchanged in \(W_2\), and Type~4 as the \emph{revision type}, because
it receives conflicting supervision in \(W_2\).

\paragraph{Routing convergence across types.}
We first test whether the retained and revision types develop selectively
similar MLP activation patterns during \(W_1\), relative to the Type~3 and
Type~5 controls. Their exposure during \(W_1\) is asymmetric: the retained
type is directly trained, whereas the revision type is held out. Optimization
may therefore move the retained-type pattern across many neurons without a
matching change in the revision type. We consequently distinguish a decrease
in their absolute distance from the weaker but more relevant condition that
they remain closer to each other than to the control types. We then examine
whether this relation weakens when the revision type is trained toward a
different outcome in \(W_2\).

\paragraph{Observed activation dynamics.}
Figure~\ref{fig:appendix_fig_activation_1_change} shows that the absolute
distance between the retained and revision types generally increases during
\(W_1\). This is consistent with their asymmetric exposure: direct
optimization moves the retained-type activation pattern, while the held-out
revision type does not undergo the same supervised change. An increase in
their full-neuron distance therefore does not rule out a selective relation
between them.

The relevant comparison is whether this pair separates less than the four
cross-control pairs \((2,3)\), \((2,5)\), \((4,3)\), and \((4,5)\).
These control distances generally increase more than the distance between
the retained and revision types. Thus, the revision type remains relatively
closer to the retained type than the control types do, even though it does
not follow the retained type's full neuron-wise movement. The \(W_1\) update
therefore produces selective relative closeness rather than uniform
activation similarity.

Figure~\ref{fig:appendix_fig_activation_2_relative} makes this comparison
explicit. Its y-axis, \textbf{Controls farther than Type~2--Type~4},
reports the percentage of the four cross-control pairs whose activation
distance exceeds the retained--revision distance. A value of \(0\%\) means
that the retained and revision types are not closer than any control pair,
whereas \(100\%\) means that they are closer than all four. This percentage
increases during \(W_1\) in both Qwen models and both Gemma models.
Llama-3.1-8B-Instruct begins at a high relative-closeness level and largely
maintains it. The main \(W_1\) effect is therefore relative convergence:
the retained and revision types may separate in absolute activation space
while remaining increasingly close relative to the controls.

This relative advantage decreases immediately after \(W_2\) begins in all
five models. Because \(W_2\) trains the revision type toward a conflicting
outcome while preserving the retained type, the decrease is consistent with
the model differentiating their previously related activation patterns. The
extent of this separation varies across models, and some retain part of the
relative closeness established during \(W_1\).
\begin{figure*}[t!]
    \centering
    \includegraphics[width=1.0\linewidth]{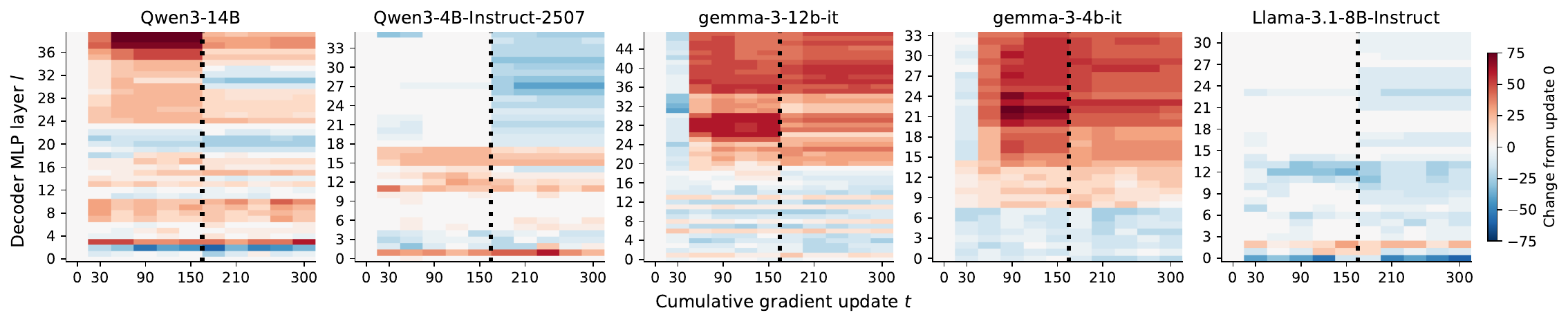}
    \caption{Layerwise change in Type~2--Type~4 relative closeness from
    update \(0\). Red indicates that the pair becomes closer than more
    cross-control pairs, whereas blue indicates the opposite. Relative
    convergence during \(W_1\) is broad across the middle and upper Gemma
    layers, more localized in Qwen, and limited in Llama. The transition to
    \(W_2\) weakens or reverses this relation in model-specific layers rather
    than uniformly across the network.}
    \label{fig:appendix_fig_activation_3_layer}
\end{figure*}

\paragraph{Layerwise relative convergence.}
Model-level trajectories average over all MLP layers and can obscure where
the Type~2--Type~4 relation emerges or disappears. We therefore examine
whether the relative convergence observed during \(W_1\) is distributed
across the network or concentrated in specific layers. Here, relative
convergence means that Type~4 remains closer to the trained Type~2 than
Types~3 and~5 do, even when their absolute distance increases.

Figure~\ref{fig:appendix_fig_activation_3_layer} shows strong layer
dependence. In Qwen3-14B, the \(W_1\) increase is concentrated in the middle
and upper layers, with \(W_2\) weakening or reversing it in only some layers.
Qwen3-4B-Instruct-2507 shows gains mainly in lower and middle layers, while
several upper layers become increasingly negative after \(W_2\).
Llama-3.1-8B-Instruct exhibits smaller and less consistent changes, partly
because its initial relative closeness is already high.

Both Gemma models show broader convergence across middle and upper layers
during \(W_1\). Many positive changes persist into \(W_2\), although some
are weakened or locally reversed. Overall, the Type~2--Type~4 relation
emerges in model-specific layers and is selectively reorganized when
Type~4 receives conflicting supervision.

\paragraph{Technical details.}
At update \(0\) and each observation point from updates \(30\) to \(300\),
we evaluate the same probe examples, using equal numbers from the retained
type (Type~2), the revision type (Type~4), and the two control types
(Types~3 and~5). For decoder layer \(l\), the routing pattern of example
\(x\) is represented by its signed post-gated activation at the final prompt
token, immediately before \texttt{down\_proj}:
\begin{equation}
    a_l(x,t)
    =
    \phi\!\left(W_{\mathrm{gate},l}(t)h_l(x)\right)
    \odot
    W_{\mathrm{up},l}(t)h_l(x)
    \in\mathbb{R}^{d_{\mathrm{ff}}}.
\end{equation}
This representation retains the magnitude and sign of each neuron-wise MLP
signal. Rather than using thresholded activation overlap, we compare the
patterns with a cross-validated diagonal-Mahalanobis distance. For each type
\(g\in\{2,3,4,5\}\), its probe examples are split once into two equally
sized, disjoint subsets \(A\) and \(B\). Let
\(\mu^A_{g,l}(t)\) and \(\mu^B_{g,l}(t)\) denote their mean activation
vectors. For two types \(g\) and \(h\), we define
\begin{equation}
    D_{gh,l}(t)
    =
    \frac{1}{d_{\mathrm{ff}}}
    \bigl(\mu^A_{g,l}(t)-\mu^A_{h,l}(t)\bigr)^\top
    W_l
    \bigl(\mu^B_{g,l}(t)-\mu^B_{h,l}(t)\bigr).
\end{equation}
Here, \(W_l\) is a diagonal precision matrix estimated from neuron-wise
residual variances on a separate baseline set, with \(10\%\) shrinkage
toward the layerwise median variance. The same \(W_l\) is used at every
update. Smaller \(D_{gh,l}(t)\) indicates more similar routing patterns.
Because the two type differences are estimated from independent subsets,
sampling noise can produce small negative values when the true distance is
near zero. We retain them to avoid the positive bias introduced by
truncation. These distances underlie
Figures~\ref{fig:appendix_fig_activation_1_change}--\ref{fig:appendix_fig_activation_3_layer}.

The baseline-relative convergence of the retained--revision pair,
corresponding to Type~2--Type~4, is
\begin{equation}
    G_{24,l}(t)
    =
    D_{24,l}(0)-D_{24,l}(t).
\end{equation}
We compare it with the four cross-control pairs formed by pairing either
the retained or revision type with a control type:
\(
\mathcal{C}=\{(2,3),(2,5),(4,3),(4,5)\}
\).
Their relative selectivity is
\begin{equation}
    S_{24,l}(t)
    =
    G_{24,l}(t)
    -
    \frac{1}{4}
    \sum_{(g,h)\in\mathcal{C}}
    \left[D_{gh,l}(0)-D_{gh,l}(t)\right].
\end{equation}
We call the change \emph{selective convergence} only when
\(G_{24,l}(t)>0\) and \(S_{24,l}(t)>0\): the retained and revision types
must become closer in absolute terms and more strongly than the controls.
Figure~\ref{fig:appendix_fig_activation_1_change} first averages these
values across decoder layers within each seed and then reports the mean and
standard error of the mean over five seeds.

Because the retained and revision types are exposed asymmetrically during
\(W_1\), we also measure their closeness relative to the control pairs:
\begin{equation}
    R_l(t)
    =
    \frac{1}{4}
    \sum_{(g,h)\in\mathcal{C}}
    \mathbb{I}\!\left[D_{24,l}(t)<D_{gh,l}(t)\right].
\end{equation}
Here, \(R_l(t)=0\%\) means that the retained--revision pair is not closer
than any control pair, whereas \(R_l(t)=100\%\) means that it is closer
than all four. Figure~\ref{fig:appendix_fig_activation_2_relative} averages
\(R_l(t)\) across layers within each seed and reports the seed trajectories
and their mean. Figure~\ref{fig:appendix_fig_activation_3_layer} instead
reports
\begin{equation}
    \Delta R_l(t)
    =
    100\left[R_l(t)-R_l(0)\right]
\end{equation}
for each layer and update, averaged over seeds. Positive values mean that
the retained--revision pair is closer than more control pairs than at
update \(0\), whereas negative values mean the opposite. Both measures are
descriptive rather than statistical pass criteria.